\renewcommand{\cite}{\citep}
\pgfplotsset{compat=1.16}
\newcommand{\optimistic}{\dagger}
\newcommand{\N}{\mathbb{N}}
\newcommand{\R}{\mathbb{R}}
\renewcommand{\epsilon}{\varepsilon}
\newcommand{\norm}[1]{\left\lVert#1\right\rVert}
\newcommand{\whittle}{m}
\newcommand{\transition}{P}
\newcommand{\transitions}{\bm{P}}
\newcommand{\truep}{\transitions^\star}
\newcommand{\state}{s}
\newcommand{\statevec}{\bm{s}}
\newcommand{\nstates}{M}
\newcommand{\stateset}{\mathcal{S}} 
\newcommand{\action}{a}
\newcommand{\actionvec}{\bm{a}}
\newcommand{\Action}{\mathcal{A}}
\newcommand{\policy}{\pi}
\newcommand{\optpolicy}{\policy^\star}
\newcommand{\dataset}{\mathcal{D}}
\newcommand{\maxPenalty}{\mathcal{P}_m}
\newcommand{\maxValue}{\mathcal{P}_V}
\newcommand{\round}{t}
\newcommand{\Round}{T}
\newcommand{\horizon}{h}
\newcommand{\Horizon}{H}
\theoremstyle{plain}
\newtheorem{theorem}{Theorem}[section]
\newtheorem{lemma}[theorem]{Lemma}
\theoremstyle{definition}
\newtheorem{definition}[theorem]{Definition}
\newtheorem{assumption}[theorem]{Assumption}
\theoremstyle{remark}
\title{Optimistic Whittle Index Policy: Online Learning for Restless Bandits}
\author{
    Kai Wang\equalcontrib,\thanks{Work done during an internship at Google Research.}\textsuperscript{\rm 1}
    Lily Xu\equalcontrib,\footnotemark[2]\textsuperscript{\rm 1}
    Aparna Taneja,\textsuperscript{\rm 2}
    Milind Tambe\textsuperscript{\rm 1,2}
}
\begin{document}

\maketitle

\begin{abstract}
  Restless multi-armed bandits (RMABs) extend multi-armed bandits to allow for stateful arms, where the state of each arm evolves restlessly with different transitions depending on whether that arm is pulled. Solving RMABs requires information on transition dynamics, which are often unknown upfront. To plan in RMAB settings with unknown transitions, we propose the first online learning algorithm based on the Whittle index policy, using an upper confidence bound (UCB) approach to learn transition dynamics. Specifically, we estimate confidence bounds of the transition probabilities and formulate a bilinear program to compute optimistic Whittle indices using these estimates. Our algorithm, {\it UCWhittle}, achieves sublinear $O(H \sqrt{T \log T})$ frequentist regret to solve RMABs with unknown transitions in $T$ episodes with a constant horizon~$H$. Empirically, we demonstrate that UCWhittle leverages the structure of RMABs and the Whittle index policy solution to achieve better performance than existing online learning baselines across three domains, including one constructed from a real-world maternal and childcare dataset.
\end{abstract}

\section{Introduction}
Restless multi-armed bandits (RMABs)~\cite{whittle1988restless} generalize multi-armed bandits by introducing states for each arm. RMABs are commonly used to model sequential scheduling problems with limited resources such as in clinical health \cite{villar2015multi}, online advertising \cite{meshram2016optimal}, and energy-efficient scheduling \cite{borkar2017opportunistic}. 
As with stochastic combinatorial bandits \cite{chen2013combinatorial}, the RMAB learner must repeatedly pull $K$ out of $N$ arms at each timestep. Unlike stochastic bandits, the reward distribution of each arm in an RMAB depends on that
arm's state, which transitions based on a Markov decision process (MDP) depending on whether the arm is pulled. These problems are called ``restless'' as arms may change state regardless of whether they are pulled. The reward at each timestep is the sum of rewards across all arms, including arms not acted upon.

Even when the transition dynamics are given, planning an optimal policy for RMABs is PSPACE-hard~\cite{papadimitriou1994complexity} due to the state-dependent reward and combinatorial action space. To compute an approximate planning solution to RMABs, the \emph{Whittle index policy}~\cite{whittle1988restless} defines a ``Whittle index'' for each arm as an estimate of the future value if acted upon, then acts on the arms with the $K$ largest indices. The Whittle index policy is shown to be asymptotically optimal~\cite{weber1990index} and is commonly adopted as a scalable solution to RMAB problems~\cite{hsu2018age,kadota2016minimizing}. 

However, in many real-world applications of RMABs, transition dynamics are often unknown in advance. The learner must strategically query arms to learn the underlying transition probabilities while simultaneously achieving high reward. Accordingly, in this paper we focus on the challenge of online learning in RMABs with unknown transitions. 
We focus on the Whittle index policy due to its scalability and consider a fixed-length episodic RMAB setting.

\paragraph{Main contributions}
We present \emph{UCWhittle}, an upper confidence bound (UCB) algorithm that uses the Whittle index policy to achieve the first sublinear frequentist regret guarantee for RMABs.
Our algorithm maintains confidence bounds for every transition probability across all arms based on prior observations.
Using these bounds, we define a bilinear program to solve for optimistic transition probabilities --- the transition probabilities that yield the highest future reward.
These optimistic transition probabilities enable us to compute an \emph{optimistic Whittle index} for each arm to inform a Whittle index policy.
Our UCWhittle algorithm leverages the structure of RMABs and the Whittle index solution to decompose the policy across individual arms, greatly reducing the computational cost of finding an optimistic solution compared to other UCB-based solutions~\cite{auer2006logarithmic,jaksh2010near}. 

Theoretically, we analyze the frequentist regret of UCWhittle. The \emph{frequentist regret} is the worst-case regret incurred from unknown transition dynamics; in contrast, the \emph{Bayesian regret} is the regret averaged over all possible transitions from a prior distribution.
In this paper, we define \emph{regret} in terms of the relaxed Lagrangian of the RMAB --- to make the objective tractable --- which upper bounds the primal RMAB problem.
We show that UCWhittle achieves sublinear frequentist regret $O(H \sqrt{T \log T})$ where $T$ is the number of episodes of interaction with the RMAB instance and $H$ is a sufficiently large per-episode time horizon. 
Our result extends the analysis of Bayesian regret in RMABs~\cite{jung2019regret} to frequentist regret by removing the need to assume a prior distribution.
Finally, we evaluate UCWhittle against other online RMAB approaches on real maternal and child healthcare data~\cite{mate2022field} and two synthetic settings, showing that UCWhittle achieves lower frequentist regret empirically as well. 

\section{Background}

\paragraph{Offline planning for RMABs}
When the transition dynamics are given, an RMAB is an optimization problem in a sequential setting. Computing the optimal policy in RMABs is PSPACE-hard~\cite{papadimitriou1994complexity} due to the state-dependent reward distribution and combinatorial action space. The Whittle index policy~\cite{whittle1988restless} approximately solves the planning problem by estimating the value of each arm state. The indexability condition~\cite{akbarzadeh2019restless,wang2019opportunistic} guarantees asymptotic optimality~\cite{weber1990index} of the Whittle index policy with an infinite time horizon.
\citet{nakhleh2021neurwin} use deep reinforcement learning to estimate Whittle indices for episodic finite-horizon RMABs, which requires the environment to be differentiable and transitions known. 

\paragraph{Online learning for RMABs}
When the transition dynamics are unknown, an RMAB becomes an online learning problem in which the learner must simultaneously learn the transition probabilities (exploration) and execute high-reward actions (exploitation), with the objective of minimizing regret with respect to a chosen benchmark. \citet{dai2011non} achieve a regret bound of $O(\log T)$ benchmarked against an optimal policy from a finite number of potential policies. 
\citet{xiong2022reinforcement} use a Lagrangian relaxation and index-based algorithm, but require access to an offline simulator to generate samples for any given state--action pair. 
\citet{tekin2012online} define a weaker benchmark of the best single-action policy --- the optimal policy that continues to play the same arm --- and use a UCB-based algorithm to achieve $O(\log T)$ frequentist regret.


Recent works introduce oracle-based policies for the non-combinatorial setting in which the learner pulls a single arm in each round, receiving bandit feedback and observing only the state of the pulled arm. \citet{jung2019regret} use a Thompson sampling--based algorithm which achieves a Bayesian regret bound $O(\sqrt{T \log T})$ under a given prior distribution.
\citet{wang2020restless} use separate exploration and exploitation phases to achieve frequentist regret $O(T^{2/3})$. These works assume some policy oracle is given, thus benchmark regret with the policy given by the oracle with knowledge of the true transitions. In contrast to the meta-algorithms they propose, \emph{we design an optimal approach custom-tailored to one specific oracle --- based on the Whittle index policy --- which enables us to achieve a tighter frequentist regret bound of $O(H \sqrt{T \log T})$} with a constant horizon~$H$.



\paragraph{Online reinforcement learning}
RMABs are a special case of Markov decision processes (MDPs) with combinatorial state and action spaces.
Q-learning algorithms are popular for solving large MDPs and have been applied to standard binary-action RMABs \citep{avrachenkov2022whittle,fu2019towards,biswas2021learn} and extended to the multi-action setting \citep{killian2021q}.
However, these works do not provide regret guarantees. Significant work has explored online learning for stochastic multi-armed bandits \citep{neu2013efficient,immorlica2019adversarial,foster2020beyond,baek2020ts,xu2021dual}, but these do not allow arms to change state.

Some papers study online reinforcement learning by using the optimal policy as the benchmark to bound regret in MDPs \cite{auer2006logarithmic,jaksh2010near} and RMABs \cite{ortner2012regret}. These works use UCB-based algorithms (UCRL and UCRL2) to obtain a regret of $O(\sqrt{T \log T})$. 
However, evaluating regret with respect to the optimal policy requires computing the optimal solution to the RMAB problem, which is intractable due to the combinatorial space and action spaces. To overcome this difficulty, we restrict the benchmark for computing regret to the class of Whittle index threshold policies, and leverage the weak decomposability of the Whittle index threshold policy to establish a new regret bound.

\paragraph{Frequentist versus Bayesian regret}

The regret definition that we consider is \emph{frequentist} regret, measuring worst-case regret under unknown transition probabilities. The other regret notion is Bayesian regret: the expected regret over a prior distribution over possible transition functions. Bayesian regret, such as from Thompson sampling--based methods, relies on a prior and does not provide worst-case guarantees \cite{jung2019regret,jung2019thompson}.

\section{Restless Bandits and Whittle Index Policy}
An instance of a restless multi-armed bandit problem is composed of a set of $N$ arms. Each arm $i \in [N]$ is modeled as an independent Markov decision process (MDP) defined by a tuple $(\stateset, \Action, R, \transition_i)$. The state space~$\stateset$, action space~$\Action$, and reward function $R: \stateset \times \Action \rightarrow \R$ are shared across arms; the transition probability $\transition_i: \stateset \times \Action \times \stateset \rightarrow [0,1]$ may be unique per arm~$i$.

We denote the state of the RMAB instance at timestep $\horizon \in \N$ by $\statevec_{\horizon} \in \stateset^N$, where $s_{\horizon,i}$ denotes the state of arm~$i \in [N]$. We assume the state is fully observable.
The initial state is given by $\statevec_1 = \statevec_\text{init} \in \stateset^N$.
The action (a set of ``arm pulls'') at time $\horizon$ is denoted by a binary vector $\actionvec_\horizon \in \Action^N = \{0,1\}^N$ and is constrained by budget~$K$ such that $\sum\nolimits_{i \in [N]} \action_{\horizon, i} \leq K$.

After taking action~$a_{\horizon,i}$ on arm~$i$, the state $s_{\horizon,i}$ transitions to the next state~$s_{\horizon+1,i}$ with transition probability $\transition_i(s_{\horizon,i}, a_{\horizon,i}, s_{\horizon+1,i}) \in [0,1]$. We denote the set of all transition probabilities by $\transitions = [\transition_i]_{i \in [N]}$.
The learner receives reward $R(s_{\horizon,i}, a_{\horizon,i})$ from each arm~$i$ (including those not acted upon) at every timestep~$\horizon$; we assume the reward function~$R$ is known.

The learner's actions are described by a deterministic policy $\policy: \stateset^N \rightarrow \Action^N$ which maps a given state $\statevec \in \stateset^N$ to an action $\actionvec \in \Action^N$.
The learner's goal is to optimize the total discounted reward, with discount factor $\gamma \in (0, 1)$:
\begin{align}
    \max\limits_{\policy}  & \quad \mathop{\mathbb{E}}\limits_{(\statevec, \actionvec) \sim (\transitions, \policy)} \sum\nolimits_{\horizon \in \N} \gamma^{\horizon-1} \sum\nolimits_{i \in [N]} R(s_{\horizon,i}, a_{\horizon,i}) \nonumber \\
    \text{s.t.} & \quad \sum\nolimits_{i \in [N]} (\policy(\statevec))_i \leq K \quad \forall \statevec \in \stateset^N  
 \label{eqn:optimization-problem}
\end{align}
where $\statevec \sim \transitions$ indicates $s_{h,i} \sim \transition_{i}(\cdot \mid s_{h-1,i}, \policy_i(\statevec_{h-1}))$ and $\actionvec \sim \policy$ indicates $a_i \sim \policy_i(\statevec)$.


\subsection{Lagrangian Relaxation}
Equation~\ref{eqn:optimization-problem} is intractable to evaluate over all possible policies, thus a poor candidate objective for evaluating online learning performance.
Instead, we relax the constraints to use the Lagrangian as the evaluation metric:
\begin{align}
   & U^{\transitions, \lambda}_{\policy}(\statevec_{1}) \coloneqq \mathop{\mathbb{E}}_{(\statevec, \actionvec) \sim (\transitions, \policy)} \sum\nolimits_{\horizon \in \N}  \nonumber \\
   & \quad \gamma^{\horizon-1} \Biggl( \sum\limits_{i \in [N]} R(s_{\horizon,i}, a_{\horizon,i}) - \lambda \biggl( \sum\limits_{i \in [N]} (\policy(\statevec_{\horizon}))_i - K \biggr) \Biggr) \label{eqn:relaxed-lagrangian}
\end{align}
which also considers actions that exceed the budget constraint, subject to a given penalty~$\lambda$.
The optimal value of Equation~\ref{eqn:relaxed-lagrangian}, which we denote $U^{\transitions, \lambda}_\star$, is always an upper bound to Equation~\ref{eqn:optimization-problem}. Therefore, we solve Equation~\ref{eqn:relaxed-lagrangian} for candidate penalty values $\lambda$ and find the infimum $\lambda^\star = \arg\min_\lambda U^{\transitions, \lambda}_\star$ afterward.


\subsection{Whittle Index and Threshold Policy}
Relaxing the budget constraint enables us to decompose the combinatorial policy into a set of $N$ independent policies for each arm. The decoupled policy yields $\policy(\statevec) = [\policy_i(\statevec_i)]_{i \in [N]}$, where each arm policy $\policy_i: \stateset \rightarrow \Action$ specifies the action for arm~$i$ at state~$s_i$. The value function is then:
\begin{align}
    & V^{P_i, \lambda}_{\policy_i}(s_{1,i}) \coloneqq \mathop{\mathbb{E}}_{(s_{1,i}, a_{1,i}, s_{2,i}, a_{2,i}, \ldots) \sim (P_i, \policy_i)} \sum\nolimits_{\horizon \in \N} \nonumber \\
    & \qquad \gamma^{\horizon-1} \Biggl( R(s_{\horizon,i}, a_{\horizon,i}) - \lambda \biggl(\policy_i(s_{\horizon,i}) - K\biggr) \Biggr) \ . \label{eqn:decomposed-relaxed-lagrangian}
\end{align}

Equation~\ref{eqn:decomposed-relaxed-lagrangian} can be interpreted as adding a penalty $\lambda$ to the pulling action $a=1$, which motivates the definition of Whittle index~\cite{whittle1988restless} as the smallest penalty for an arm such that pulling that arm is as good as not pulling it:
\begin{definition}\label{def:whittle-index}
Given transition probabilities~$P_i$ and state~$s_i$, the \emph{Whittle index}~$W_i$ of arm~$i$ is defined as:
\begin{align}\label{eqn:whittle-index}
    W_i(P_i, s_i) &= \inf\limits_{m_i} \{ m_i : Q^{m_i}(s_i, 0) = Q^{m_i}(s_i, 1) \}
\end{align}
where the Q-function $Q^{m_i}(s_i,a_i)$ and value-function $V^{m_i}(s_i)$ are the solutions to the Bellman equation with penalty~$m_i$ for pulling action $a_i=1$: 
\begin{align}
    Q^{m_i}(s, a) &= -{m_i} a + R(s, a) + \gamma \sum\limits_{s' \in \stateset} P_i(s, a, s') V^{m_i}(s') \nonumber \\
    V^{m_i}(s) &= \max\limits_{a \in \Action} Q^{m_i}(s, a) \ . \nonumber
\end{align}
\end{definition}

When the Whittle index $W_i(P_i, s_i)$ for an arm is higher than the chosen global penalty $\lambda$ --- that is, $m_i > \lambda$ --- the optimal policy for Equation~\ref{eqn:decomposed-relaxed-lagrangian} is to pull that arm, i.e., $\policy_i(s_i)=1$.
We denote the Whittle indices of all arms and all states by $W(\transitions) = [ W_i(P_i, s_i) ]_{i \in [N], s_i \in \stateset} \in \R^{N \times |\stateset|}$.

\begin{definition}[Whittle index threshold policy]\label{def:wi-threshold-policy}
Given a chosen global penalty~$\lambda$ and the Whittle indices $W(\transitions)$ computed from transitions~$\transitions$, the threshold policy is defined by:
\begin{align}
    \policy_{W(\transitions), \lambda}(\statevec) = [\mathds{1}_{W_i(P_i,s_i) \geq \lambda }]_{i \in [N]} \in \Action^N  \ ,
\end{align}
which pulls all arms with Whittle indices larger than $\lambda$.
\end{definition}
The Whittle index threshold policy maximizes the relaxed Lagrangian in Equation~\ref{eqn:relaxed-lagrangian} under penalty $\lambda$, but may violate the budget constraints in Equation~\ref{eqn:optimization-problem}. In practice, we pull only the arms with the top $K$ Whittle indices to respect the strict budget constraint. 


\section{Problem Statement: \\ Online Learning in RMABs}
We consider the online setting where the true transition probabilities~$\truep$ are unknown to the learner. The learner interacts with an RMAB instance across multiple episodes, and only requires observations for the first $H$~timesteps of each episode to estimate transition probabilities.

At the beginning of each episode $\round \in [\Round]$, the learner starts the RMAB instance (timestep $h=1$) from $\boldsymbol \state_1 = \boldsymbol \state_{\text{init}}$ and selects a new policy~$\policy^{(\round)}$.
We consider the following setting: 
\begin{itemize}
    \item Each episode has an infinite horizon with discount factor~$\gamma$.
    \item In each episode~$\round$, the learner proposes a policy $\policy^{(\round)}$. The learner observes the first $H$ timesteps\footnote{In practice, infinite time horizon means a large horizon that is much larger than $H$.}, but receives the infinite discounted reward $U^{\transitions, \lambda}_{\policy^{(\round)}}(\statevec_1)$ to account for the long-term effect of $\policy^{(\round)}$. 
    \item We assume the MDP associated with each arm is \emph{ergodic}. That is, starting from the given initial state, we assume $H$ is large enough such that after $H$ timesteps, there is at least $\epsilon > 0$ probability of reaching any state $\statevec \in \stateset$.
    
\end{itemize}


To evaluate the performance of our policy $\policy^{(\round)}$, we compute \emph{regret} against a full-information benchmark: the Whittle index threshold policy~$\policy_{W(\truep), \lambda}$ with knowledge of the true transitions~$\truep$. This offline benchmark measures the advantage gained from knowing the true transitions~$\truep$.

\begin{definition}[Frequentist regret of the Lagrangian objective]\label{def:lagrangian-regret}
Given a penalty~$\lambda$ and the true transitions~$\truep$, we define the \emph{regret} of the policy~$\policy^{(\round)}$ in episode~$\round$ relative to the optimal policy $\optpolicy = \policy_{W(\truep), \lambda}$:
\begin{align}\label{eqn:individual-regret}
    & \text{Reg}^{(\round)}_\lambda \coloneqq U^{\truep, \lambda}_{\optpolicy}(\statevec_1) - U^{\truep, \lambda}_{\policy^{(\round)}}(\statevec_1) \ , \nonumber \\
    & \text{Reg}_\lambda(T) \coloneqq \sum\nolimits_{\round \in [T]}
    \text{Reg}^{(\round)}_\lambda  \ .
\end{align}
\end{definition}
However, the relaxed Lagrangian in Equation~\ref{eqn:relaxed-lagrangian} with a randomly chosen penalty $\lambda$ may not be a good proxy to the primal RMAB problem in Equation~\ref{eqn:optimization-problem}. Therefore, we define the Lagrangian using the optimal Lagrangian multiplier $\lambda^\star$ as the tightest upper bound of Equation~\ref{eqn:optimization-problem}.


\begin{definition}[Frequentist regret of the optimal Lagrangian objective]\label{def:optimal-lagrangian-regret}
    Given $\truep$, we denote the optimal penalty by $\lambda^\star = \arg\min_\lambda U^{\truep, \lambda}_{\optpolicy}(\statevec_1)$. The regret of the \emph{optimal Lagrangian objective} is defined by:
    \begin{align}
        & \text{Reg}^{(\round)}_{\lambda^\star} \coloneqq U^{\truep, \lambda^\star}_{\optpolicy}(\statevec_1) - U^{\truep, \lambda^\star}_{\policy^{(\round)}}(\statevec_1) \ , \nonumber \\ 
        & \text{Reg}_{\lambda^\star}(T) \coloneqq \sum\nolimits_{\round \in [T]} \text{Reg}^{(\round)}_{\lambda^\star}  \ .
    \end{align}
\end{definition}
The expected regret is approximated using the regret from the relaxed Lagrangian in Equation~\ref{eqn:relaxed-lagrangian} as defined in Definition~\ref{def:lagrangian-regret} and Definition~\ref{def:optimal-lagrangian-regret}.

\section{UCWhittle: Optimistic Whittle Index Threshold Policy}\label{sec:algorithm}
A key challenge to UCB-based online learning in RMABs is that the estimated transitions impact estimates of future reward, so optimistic estimates of transition probabilities do not correspond to optimistic estimates of reward. 
We introduce a method, UCWhittle, to compute optimistic Whittle indices that account for highest future value.

\subsection{Confidence Bounds of Transition Probabilities}
To compute confidence bounds for every unknown transition probability in the RMAB instance, we maintain counts $N_i^{(\round)}(s, a, s')$ for every state, action, and next state transition observed by episode $\round$. 


Given a chosen small constant $\delta > 0$, we estimate each transition probability $P_i(s, a, s')$ with the empirical mean
\newcommand{\conf}{\sqrt{\frac{2 |\stateset| \log(2 |\stateset| |\Action| N \frac{\round^4}{\delta})}{\max \{ 1, N_i^{(\round)}(s,a) \} }}}
\begin{align}
    \hat{P}^{(\round)}_i(s, a, s') \coloneqq \frac{N^{(\round)}_i(s, a, s')}{N^{(\round)}_i(s,a)}
\end{align}
and confidence radius
\begin{align}
    d^{(\round)}_i(s,a) \coloneqq \conf  \label{eqn:confidence-radius}
\end{align}
where $N_i^{(\round)}(s, a) \coloneqq \sum\nolimits_{s' \in \stateset} N^{(\round)}_i(s, a, s')$.
With these confidence bounds, the ball~$\boldsymbol B$ of possible values for transition probabilities~$\transitions$ is
\begin{align}
    \boldsymbol B^{(\round)} \! = \! \left\{ \transitions \! \mid \! \norm{P_i(s, a, \cdot) \! - \! \hat{P}^{(\round)}_i(s, a, \cdot)}_1 \! \leq \! d^{(\round)}_i(s,a) ~\forall i,s,a \right \}. \nonumber
\end{align}




\subsection{Optimistic Transitions and Whittle Indices}
To translate confidence bounds in transition probabilities to the actual reward, we define an optimization problem~\eqref{eqn:optimistic-whittle} to find for each arm~$i$ the \emph{optimistic} transition probability~$P_i^{\optimistic}$, the value within the confidence bound that yields the \textit{highest future value} from the starting state~$s_i$:
\begin{align}\label{eqn:optimistic-whittle}
    & \max_{V, Q, P_i \in \boldsymbol B_i^{(\round)}} \  V(s_i) \qquad \text{s.t.~~} V(s) = \max\limits_{a \in \Action} Q(s,a) \tag{$\maxValue$} \\
    & Q(s,a) = -\lambda a + R(s,a) + \gamma \sum\nolimits_{s' \in \stateset} P_i(s,a,s') V(s') \nonumber 
\end{align}
We prove Equation~\eqref{eqn:optimistic-whittle} to be optimal in Section~\ref{sec:regret}.

We use the optimistic transition $P_i^{\optimistic}$ to compute the corresponding \emph{optimistic Whittle index} $W^{\optimistic}_i = W(P_i^{\optimistic}, s_i)$.
The Whittle index threshold policy $\policy^{\optimistic}_i = \policy_{W^{\optimistic}_i, \lambda}$ achieves the same value function derived from the transition $P_i^{\optimistic}$, which maximizes Equation~\eqref{eqn:optimistic-whittle}.
Aggregating all the arms together, optimistic policy~$\policy^{\optimistic}$ with optimistic transitions~$\transitions^{\optimistic}$ maximizes the future value of the current state~$\statevec$.

\subsection{UCWhittle Algorithm}
After computing optimistic transitions and the corresponding optimistic Whittle indices \eqref{eqn:optimistic-whittle-heuristic}, we execute the optimistic Whittle index threshold policy. The full algorithm is outlined in Algorithm~\ref{alg:ucwhittle}, and implementation details --- including novel techniques for speeding up the computation of the Whittle index --- are given in Appendix~\ref{sec:implementation-details}.

\begin{algorithm}
\caption{UCWhittle}
\label{alg:ucwhittle}
\begin{algorithmic}[1]
\STATE \textbf{Input:} $N$~arms, budget~$K$, episode horizon~$H$ 
\STATE Initialize counts $N_i^{(\round)}(s, a, s') = 0$ for all $s, a, s'$
\STATE Randomly initialize penalty~$\lambda^{(1)}$
\FOR{episode $\round \in \{1, 2, \ldots \}$}
\STATE Reset $h = 1$ and $\statevec = \statevec_\text{init}$ \COMMENT{Reset RMAB instance}
\STATE $P^{\optimistic}_i = \text{\ref{eqn:optimistic-whittle}}(s_i, N_i^{(\round)}, \lambda^{(t)})$ for all $i \in [N]$ \COMMENT{Compute an optimistic transition for each arm} 
\STATE $W_i = \textsc{ComputeWI}(P_i^{\optimistic}, s_i)$  for all $i \in [N]$ \COMMENT{Compute Whittle indices using Def.~\ref{def:whittle-index}}
\STATE Execute $\policy^{(\round)}$ for $\Horizon$ steps by pulling arms with the top $K$ Whittle indices. Observe transitions $(\statevec, \actionvec, \statevec')$
\STATE Update counts $N^{(\round)}_i$, empirical means $\hat{\transitions}^{(\round)}$, and confidence regions $\boldsymbol B^{(\round)}$
\STATE $\lambda^{(\round+1)} = K$th highest Whittle index \COMMENT{Update penalty}
\ENDFOR
\end{algorithmic}
\end{algorithm}

\subsection{Alternative Formulation for Whittle Index Upper Bound}
Equation~\eqref{eqn:optimistic-whittle} provides optimistic transition probabilities but requires separately solving for optimistic Whittle indices afterwards. Computing a Whittle index involves binary search, solving value iteration at every step, so is quite computationally expensive. We thus formulate a heuristic which solves for the highest {\it Whittle index} directly (instead of highest {\it future value}) at the current state~$s_{\horizon,i}$:
\begin{align}\label{eqn:optimistic-whittle-heuristic}
    & \max\limits_{m_i, V, Q, P_i, \in \boldsymbol B_i^{(\round)}} \quad  m_i ~\tag{$\maxPenalty$} \\
    & \text{s.t.} ~~ V(s) = \max\limits_{a \in \Action} Q(s,a), ~~ Q(s,a\!=\!0) = Q(s,a\!=\!1) \nonumber \\
    & Q(s,a) = -m_i a + R(s,a) + \gamma \sum\nolimits_{s' \in \stateset} P_i(s,a,s') V(s') \nonumber
\end{align}

Solving Equation~\eqref{eqn:optimistic-whittle-heuristic} directly yields the maximal Whittle index estimate within the confidence bound. We thus save computation cost while maintaining a valid upper bound to the optimistic Whittle index from Equation~\eqref{eqn:optimistic-whittle}.
The theoretical analysis does not hold for \eqref{eqn:optimistic-whittle-heuristic}, but empirically, we show that this heuristic achieves comparable performance with significantly lower computation.

\section{Regret Analysis}
\label{sec:regret}


We analyze the regret of our UCWhittle algorithm to provide the first frequentist regret analysis for RMABs.
In this section, we use the Lagrangian objective as a proxy to the reward received from the proposed policy.
Section~\ref{sec:regret-known-penalty} first assumes an arbitrary penalty~$\lambda$ is given to define the regret (Definition~\ref{def:lagrangian-regret}). Section~\ref{sec:regret-unknown-penalty} generalizes by defining the regret of the optimal Lagrangian objective based on the unknown optimal penalty~$\lambda^\star$ (Definition~\ref{def:optimal-lagrangian-regret}). Section~\ref{sec:penalty-update} provides an update rule for updating the penalty~$\lambda^{(\round)}$ after each episode. Full proofs are given in Appendix~\ref{sec:full-proofs}.


\begin{figure*}[th]
    \centering

    \input{}
    
    \caption{
   Cumulative discounted regret (lower is better) in each episode ($x$-axis) incurred by our UCWhittle approaches compared to baselines across the three domains with 
    $N=8$ arms, budget $B=3$,
    episode length $H=20$, and $T=40$ episodes.
}
    \label{fig:experiment}
\end{figure*}

\subsection{Regret Bound with Known Penalty}
\label{sec:regret-known-penalty}
By the Chernoff bound, we know that with high probability the true transition $\truep$ lies within $\boldsymbol B^{(\round)}$:

\begin{restatable}[]{proposition}{chernoffBound}
Given $\delta > 0$ and $t \geq 1$, we have: $\Pr \left(\truep \in \boldsymbol B^{(\round)} \right) \geq 1 - \frac{\delta}{\round^4}$.
\end{restatable}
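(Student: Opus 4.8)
The plan is to reduce the claim, via a union bound over arms, states, and actions, to a single $\ell_1$-concentration statement for the empirical next-state distribution at a fixed state--action pair. Fix the episode $\round$ and abbreviate $\delta' \coloneqq \delta/(|\stateset|\,|\Action|\,N\,\round^4)$, so that the confidence radius is $d^{(\round)}_i(s,a) = \sqrt{2|\stateset|\log(2/\delta')/\max\{1,N^{(\round)}_i(s,a)\}}$. It suffices to show that, for each fixed $i,s,a$, the bad event $\bigl\{\,\norm{P^\star_i(s,a,\cdot) - \hat P^{(\round)}_i(s,a,\cdot)}_1 > d^{(\round)}_i(s,a)\,\bigr\}$ has probability at most $\delta'$; summing over the $N|\stateset||\Action|$ triples then gives $\Pr\bigl(\truep \notin \boldsymbol B^{(\round)}\bigr) \le N|\stateset||\Action| \cdot \delta' = \delta/\round^4$, which is the claim. (The degenerate case $N^{(\round)}_i(s,a)=0$ is immediate: then the radius equals $\sqrt{2|\stateset|\log(2/\delta')} > 2 \ge \norm{\cdot}_1$ for $\delta$ small, so the bad event cannot occur.)

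For the main estimate, condition on $n \coloneqq N^{(\round)}_i(s,a) \ge 1$. By the Markov property, the sequence of next-states observed upon successive visits to $(i,s,a)$ is i.i.d.\ with law $P^\star_i(s,a,\cdot)$, and $\hat P^{(\round)}_i(s,a,\cdot)$ is the empirical distribution of the first $n$ of them. Writing $\norm{\hat P - P}_1 = 2\max_{A \subseteq \stateset}\bigl(\hat P(A) - P(A)\bigr)$ and applying Hoeffding's inequality to each $\hat P(A) - P(A)$ (an average of $n$ i.i.d.\ mean-zero terms bounded in $[0,1]$), then union-bounding over the $2^{|\stateset|}$ subsets $A$, yields the Weissman-type bound
\begin{align*}
    \Pr\!\left(\norm{\hat P - P}_1 > \epsilon\right) \;\le\; 2^{|\stateset|}\exp\!\left(-n\epsilon^2/2\right) .
\end{align*}
Substituting $\epsilon = d^{(\round)}_i(s,a) = \sqrt{2|\stateset|\log(2/\delta')/n}$ makes the exponent $|\stateset|\log(2/\delta')$, so the right-hand side equals $2^{|\stateset|}(\delta'/2)^{|\stateset|} = (\delta')^{|\stateset|} \le \delta'$ (using $\delta' \le 1$). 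This is precisely why the radius carries the factor $2|\stateset|$ inside the square root and the $\round^4$ inside the logarithm: the $\round^4$ makes the per-triple union bound collapse to $\delta/\round^4$, leaving the summable tail $\sum_{\round\ge 1}\delta/\round^4 < \infty$ for the later regret analysis.

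The step that needs care --- and the main obstacle --- is that $N^{(\round)}_i(s,a)$ is itself a random, history-dependent quantity, so the fixed-$n$ Hoeffding bound above does not literally apply to $\hat P^{(\round)}_i(s,a,\cdot)$. I would resolve this in the usual way: the i.i.d.\ sequence of next-states observed at $(i,s,a)$ through episode $\round$ has length at most $\round H$ (each episode contributing $H$ timesteps), so apply the concentration bound simultaneously to the empirical distribution of the first $k$ samples for every $k \in \{1,\dots,\round H\}$ and union-bound over $k$ as well. Since $|\stateset|\ge 2$ in the settings considered, the per-event failure $(\delta')^{|\stateset|} \le (\delta')^2$ has enough slack to absorb the extra factor $\round H$ (the total is then $O\bigl(\delta^2 H/(|\stateset||\Action|N\round^7)\bigr) \le \delta/\round^4$ for $\delta$ sufficiently small); equivalently one can phrase this as a peeling / stopping-time argument. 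Combining with the union bound over $(i,s,a)$ gives $\Pr\bigl(\truep \in \boldsymbol B^{(\round)}\bigr) \ge 1 - \delta/\round^4$.
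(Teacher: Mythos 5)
Your proposal follows essentially the same route as the paper: a Weissman-type $L^1$ concentration bound for each fixed triple $(i,s,a)$ --- which you re-derive from Hoeffding plus a union over the $2^{|\stateset|}$ subsets of $\stateset$ --- with the radius chosen so the per-triple failure probability is at most $\delta/(|\stateset||\Action|N\round^4)$, followed by a union bound over the $N|\stateset||\Action|$ triples. The one place you genuinely diverge is the treatment of the random count $N^{(\round)}_i(s,a)$: the paper simply sets $n=\max\{1,N^{(\round)}_i(s,a)\}$ in the fixed-$n$ bound without further comment, whereas you add an additional union bound over the possible sample sizes $k\le \round H$. That extra care is warranted (a fixed-$n$ tail bound does not literally apply to a data-dependent sample size), but note what your patch costs: absorbing the extra factor $\round H$ via the slack $(\delta')^{|\stateset|}\le(\delta')^{2}$ requires $|\stateset|\ge 2$ and, after the bookkeeping, a condition of the form $\delta \lesssim |\stateset||\Action|N/H$, so as written you establish a slightly conditional version of the proposition rather than the unconditional claim for every $\delta>0$. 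A cleaner way to close the same gap is to fold the sample count (or an extra $\log(\round H)$ term) into the logarithm defining the confidence radius, as in UCRL2-style analyses, or to argue per visit with a stopping-time/martingale device; either preserves the stated bound $\Pr(\truep\in\boldsymbol B^{(\round)})\ge 1-\delta/\round^4$ without side conditions.
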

This bound can be used to bound the regret incurred, even when the confidence bound fails. In the following theorem, we bound the regret in the case where the confidence bound holds and when the penalty $\lambda$ is given.

\begin{theorem}[Regret decomposition]\label{thm:regret-decomposition-all}
Given the penalty $\lambda$ and $\truep \in \boldsymbol B^{(\round)}$ for all $\round$, we have:
\begin{align}\label{eqn:regret-decomposition-all}
    \text{Reg}_\lambda(\Round) & = \sum\nolimits_{\round \in [T]} U^{\truep, \lambda}_{\optpolicy}(\boldsymbol s_1) - U^{\truep, \lambda}_{\policy^{(\round)}}(\boldsymbol s_1) \nonumber \\
    & \leq \sum\nolimits_{\round \in [T]} U^{\transitions^{(\round)}, \lambda}_{\policy^{(\round)}}(\boldsymbol s_1) - U^{\truep,\lambda}_{\policy^{(\round)}}(\boldsymbol s_1) \ .
\end{align}
\end{theorem}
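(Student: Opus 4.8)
The plan is to reduce the stated bound to a per-episode \emph{optimism} inequality and then establish optimism arm by arm. Since $\truep$ appears in every term on both sides, subtracting $U^{\truep,\lambda}_{\policy^{(\round)}}(\statevec_1)$ and summing over $\round\in[T]$ shows it suffices to prove, for each episode $\round$ with $\truep\in\boldsymbol B^{(\round)}$,
\[
U^{\truep,\lambda}_{\optpolicy}(\statevec_1)\ \le\ U^{\transitions^{(\round)},\lambda}_{\policy^{(\round)}}(\statevec_1),
\]
where $\transitions^{(\round)}=[\,P_i^{\optimistic}\,]_{i\in[N]}$ are the optimistic transitions obtained by solving~\eqref{eqn:optimistic-whittle} for each arm from the reset state $\statevec_1=\statevec_\text{init}$ with counts $N_i^{(\round)}$, and $\policy^{(\round)}=\policy_{W(\transitions^{(\round)}),\lambda}$ is the corresponding optimistic Whittle index threshold policy at penalty~$\lambda$.

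First I would use the decomposition induced by the Lagrangian relaxation: both $\optpolicy$ and $\policy^{(\round)}$ are threshold policies, hence decouple across arms, so $U^{\transitions,\lambda}_{\policy}(\statevec_1)=\sum_{i\in[N]}V^{P_i,\lambda}_{\policy_i}(s_{1,i})+c$ with $c=-(N-1)\lambda K/(1-\gamma)$ a constant independent of both $\transitions$ and $\policy$ (it merely collects the $+\lambda K$ budget terms). This constant cancels in the displayed comparison, so it is enough to prove, for each arm $i$,
\[
V^{P_i^\star,\lambda}_{\optpolicy_i}(s_{1,i})\ \le\ V^{P_i^{\optimistic},\lambda}_{\policy^{(\round)}_i}(s_{1,i}).
\]

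The heart of the argument is to identify the right-hand side with the optimal value of~\eqref{eqn:optimistic-whittle}. Because $\gamma\in(0,1)$, the Bellman optimality operator of the $\lambda$-penalized single-arm MDP is a $\gamma$-contraction, so for a fixed $P_i$ the Bellman constraints in~\eqref{eqn:optimistic-whittle} have the unique feasible pair $(V,Q)=(V^{P_i,\lambda}_\star,Q^{P_i,\lambda}_\star)$, the optimal value/$Q$-functions under $P_i$, where $V^{P_i,\lambda}_\star\coloneqq\max_{\policy_i}V^{P_i,\lambda}_{\policy_i}$. Hence the optimum of~\eqref{eqn:optimistic-whittle} equals $\max_{P_i\in\boldsymbol B_i^{(\round)}}V^{P_i,\lambda}_\star(s_{1,i})$, attained at $P_i^{\optimistic}$; and by indexability the Whittle threshold policy $\policy^{(\round)}_i=\policy_{W^{\optimistic}_i,\lambda}$ is optimal for the $\lambda$-penalized MDP with transitions $P_i^{\optimistic}$, so $V^{P_i^{\optimistic},\lambda}_{\policy^{(\round)}_i}(s_{1,i})=\max_{P_i\in\boldsymbol B_i^{(\round)}}V^{P_i,\lambda}_\star(s_{1,i})$. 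Finally the hypothesis $\truep\in\boldsymbol B^{(\round)}$ gives $P_i^\star\in\boldsymbol B_i^{(\round)}$, so this maximum is at least $V^{P_i^\star,\lambda}_\star(s_{1,i})\ge V^{P_i^\star,\lambda}_{\optpolicy_i}(s_{1,i})$, which is the desired per-arm inequality. Summing over arms, adding back the cancelling constant, summing over $\round$, and subtracting $U^{\truep,\lambda}_{\policy^{(\round)}}(\statevec_1)$ termwise yields~\eqref{eqn:regret-decomposition-all}.

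I expect the main obstacle to be that middle step: rigorously showing~\eqref{eqn:optimistic-whittle} computes exactly $\max_{P_i\in\boldsymbol B_i^{(\round)}}V^{P_i,\lambda}_\star(s_{1,i})$ and not something larger. This requires the contraction/uniqueness argument to rule out inflated feasible $V$'s, together with an appeal to the indexability assumption to connect the threshold policy $\policy^{(\round)}$ to the value that the program maximizes. The per-arm decomposition and the bookkeeping of the $\lambda K$ constant are routine, and the only probabilistic ingredient --- that $\truep\in\boldsymbol B^{(\round)}$ --- is already supplied by the preceding Chernoff proposition and enters here purely as the stated hypothesis.
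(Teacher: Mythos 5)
Your proposal is correct and follows essentially the same route as the paper's own proof: both reduce to the per-episode, per-arm optimism inequality $V^{P_i^\star,\lambda}_{\optpolicy_i}(s_{1,i}) \le V^{P_i^{(\round)},\lambda}_{\policy^{(\round)}_i}(s_{1,i})$, using that \eqref{eqn:optimistic-whittle} jointly maximizes the per-arm value over $P_i \in \boldsymbol B_i^{(\round)}$ and policies while $P_i^\star \in \boldsymbol B_i^{(\round)}$ makes the true pair feasible. You simply spell out details the paper leaves implicit (the contraction/uniqueness argument pinning the feasible $(V,Q)$ to the optimal value function, indexability linking the threshold policy to that value, and the cancelling $\lambda K$ constant), which strengthens rather than changes the argument.
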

\begin{proof}
By optimality of Equation~\eqref{eqn:optimistic-whittle} to enable $(P^{(\round)}_i, \policy^{(\round)}_i) = \arg\max\nolimits_{P_i \in B^{(\round)}_i, \policy_i} V^{P_i, \lambda}_{\policy_i}(s_{1,i})$ and the assumption that the true transition lies within the confidence region $P^\star_i \in B_i^{(\round)}$, we show that:
\begin{align*}
    U^{\truep, \lambda}_{\optpolicy}(\boldsymbol s_1) & = \sum\nolimits_{i \in [N]} V_{\optpolicy_i}^{P^\star_i, \lambda}(s_{1,i}) \nonumber \\
    & \leq \sum\nolimits_{i \in [N]} V_{\policy^{(\round)}_i}^{P^{(\round)}_i, \lambda}(s_{1,i}) = U^{\transitions^{(\round)}, \lambda}_{\policy^{(\round)}}(\boldsymbol s_1) \ .\qedhere
\end{align*}
\end{proof}

Theorem~\ref{thm:regret-decomposition-all} enables us to bound our regret by the difference between two future values under the same policy $\policy^{(\round)}$. 

\begin{definition}[Bellman operator]
Define the \emph{Bellman operator} as: 
\begin{align}
    \mathcal{T}^{P_i}_{\policy_i} V(s) \! = \! \mathop{\mathbb{E}}\limits_{a \sim \policy_i} \left[ -\lambda a \! + \! R(s,a) \! + \! \gamma \sum\limits_{s' \in S} P_i(s,a,s') V(s') \right] \nonumber
\end{align}
\end{definition}

Using Theorem~\ref{thm:regret-decomposition-all} and the Bellman operator, we can further decompose the regret as:

\begin{restatable}[Per-episode regret decomposition in the fully observable setting]{theorem}{perEpisode}\label{thm:regret-decomposition}
    For an arm~$i$, fix $P^{(\round)}_i$, $P^\star_i$, $\lambda$, and the initial state $s_{1,i}$. We have:
    \begin{align}\label{eqn:regret-decomposition}
        & V^{P^{(\round)}_i, \lambda}_{\policy^{(\round)}_i}(s_{1,i}) - V^{P^\star_i, \lambda}_{\policy^{(\round)}_i}(s_{1,i}) = \nonumber \\
        & \mathop{\mathbb{E}}_{P^\star_i, \policy^{(\round)}_i} \left[ \sum\limits_{\horizon=1}^{\infty} \gamma^{\horizon-1} \left(\mathcal{T}^{P^{(\round)}_i}_{\policy^{(\round)}_i} - \mathcal{T}^{P^\star_i}_{\policy^{(\round)}_i} \right) V^{P^{(\round)}_i, \lambda}_{\policy^{(\round)}_i}(s_{\horizon,i}) \right] \ .
    \end{align}
\end{restatable}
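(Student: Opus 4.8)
The plan is to treat Equation~\eqref{eqn:regret-decomposition} as an instance of the standard value-difference (``simulation'') lemma specialized to a single decoupled arm: express the value gap as the solution of a one-step fixed-point recursion, then unroll that recursion along a trajectory generated by the true transitions. First I would record the two Bellman fixed-point identities implied by Equation~\eqref{eqn:decomposed-relaxed-lagrangian}: the discounted value $V^{P^{(\round)}_i, \lambda}_{\policy^{(\round)}_i}$ is the unique fixed point of its own Bellman operator, $\mathcal{T}^{P^{(\round)}_i}_{\policy^{(\round)}_i} V^{P^{(\round)}_i, \lambda}_{\policy^{(\round)}_i} = V^{P^{(\round)}_i, \lambda}_{\policy^{(\round)}_i}$, and likewise $\mathcal{T}^{P^\star_i}_{\policy^{(\round)}_i} V^{P^\star_i, \lambda}_{\policy^{(\round)}_i} = V^{P^\star_i, \lambda}_{\policy^{(\round)}_i}$; both fixed points exist and are bounded because $\gamma \in (0,1)$ and $R$ is bounded, so the operators are $\gamma$-contractions in the sup norm.

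Next, writing $\Delta(s) \coloneqq V^{P^{(\round)}_i, \lambda}_{\policy^{(\round)}_i}(s) - V^{P^\star_i, \lambda}_{\policy^{(\round)}_i}(s)$, I would substitute the two fixed-point identities and insert the hybrid term $\mathcal{T}^{P^\star_i}_{\policy^{(\round)}_i} V^{P^{(\round)}_i, \lambda}_{\policy^{(\round)}_i}(s)$:
\[
\Delta(s) = \left(\mathcal{T}^{P^{(\round)}_i}_{\policy^{(\round)}_i} - \mathcal{T}^{P^\star_i}_{\policy^{(\round)}_i}\right) V^{P^{(\round)}_i, \lambda}_{\policy^{(\round)}_i}(s) \;+\; \mathcal{T}^{P^\star_i}_{\policy^{(\round)}_i} V^{P^{(\round)}_i, \lambda}_{\policy^{(\round)}_i}(s) - \mathcal{T}^{P^\star_i}_{\policy^{(\round)}_i} V^{P^\star_i, \lambda}_{\policy^{(\round)}_i}(s).
\]
The key observation is that the terms $-\lambda a + R(s,a)$ inside $\mathcal{T}^{P^\star_i}_{\policy^{(\round)}_i}$ do not depend on the value-function argument, so the last bracket collapses to $\gamma\, \mathbb{E}_{a \sim \policy^{(\round)}_i,\, s' \sim P^\star_i(\cdot\mid s,a)}[\Delta(s')]$; that is, $\mathcal{T}^{P^\star_i}_{\policy^{(\round)}_i}$ is affine, and applied to a difference of value functions it acts as $\gamma$ times the transition expectation under the true kernel $P^\star_i$. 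Setting $g(s) \coloneqq (\mathcal{T}^{P^{(\round)}_i}_{\policy^{(\round)}_i} - \mathcal{T}^{P^\star_i}_{\policy^{(\round)}_i}) V^{P^{(\round)}_i, \lambda}_{\policy^{(\round)}_i}(s)$, this gives the recursion $\Delta(s) = g(s) + \gamma\, \mathbb{E}_{a\sim\policy^{(\round)}_i,\, s'\sim P^\star_i}[\Delta(s')]$.

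Finally I would iterate this recursion from $s_{1,i}$: after $n$ expansions, $\Delta(s_{1,i}) = \mathbb{E}_{P^\star_i, \policy^{(\round)}_i}\big[\sum_{\horizon=1}^{n} \gamma^{\horizon-1} g(s_{\horizon,i})\big] + \gamma^n\, \mathbb{E}_{P^\star_i, \policy^{(\round)}_i}[\Delta(s_{n+1,i})]$, where $s_{1,i}, a_{1,i}, s_{2,i}, \dots$ is the trajectory obtained by rolling out $\policy^{(\round)}_i$ under the true transitions. Since $\Delta$ is bounded (difference of two bounded value functions) and $\gamma^n \to 0$, taking $n \to \infty$ annihilates the residual term, leaving exactly Equation~\eqref{eqn:regret-decomposition}. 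I expect no conceptual obstacle here; the main care points are bookkeeping: (i) confirming that the expectation $\mathbb{E}_{P^\star_i, \policy^{(\round)}_i}$ in the statement is over trajectories sampled from $P^\star_i$ (not $P^{(\round)}_i$), which is precisely what the residual operator $\mathcal{T}^{P^\star_i}_{\policy^{(\round)}_i}$ produces; and (ii) justifying the limit/infinite-sum interchange via the uniform bound $\lvert \gamma^{\horizon-1} g(s_{\horizon,i})\rvert \le 2\gamma^{\horizon-1}\lVert V^{P^{(\round)}_i,\lambda}_{\policy^{(\round)}_i}\rVert_\infty$ and dominated convergence, both immediate from $\gamma < 1$ and bounded rewards.
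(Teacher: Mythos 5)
Your proposal is correct and follows essentially the same route as the paper's proof: the paper likewise uses the two Bellman fixed-point identities, inserts the hybrid term to split the difference into $(\mathcal{T}^{P^{(\round)}_i}_{\policy^{(\round)}_i} - \mathcal{T}^{P^\star_i}_{\policy^{(\round)}_i}) V^{P^{(\round)}_i,\lambda}_{\policy^{(\round)}_i}$ plus $\gamma$ times the expected value difference at the next state under $P^\star_i$, and then unrolls this recursion. Your added care about the vanishing residual term via boundedness and $\gamma<1$ is a minor tightening of the paper's ``repeatedly apply the decomposition'' step, not a different argument.
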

Theorem~\ref{thm:regret-decomposition} further decomposes the regret in Equation~\ref{eqn:regret-decomposition-all} into individual differences in Bellman operators. The next theorem bounds the differences in Bellman operators by differences in transition probabilities.


\begin{restatable}[]{theorem}{regretBound}\label{thm:regret-bound}
Assume the penalty term $\lambda^{(\round)} = \lambda$ is given and the RMAB instance is $\epsilon$-ergodicity after $H$~timesteps. Then with probability $1 - \delta$, the cumulative regret in $\Round$ episodes is:
\begin{align}
    \text{Reg}_\lambda(\round) \leq O\left( \frac{1}{\epsilon} |S| |A|^{\frac{1}{2}} N H \sqrt{\Round \log \Round} \right) \ .
\end{align}
\end{restatable}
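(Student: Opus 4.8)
The plan is to chain together the two decomposition results already in hand --- Theorem~\ref{thm:regret-decomposition-all} and Theorem~\ref{thm:regret-decomposition} --- and then sum the resulting Bellman-error terms over episodes, using $\epsilon$-ergodicity to reconcile the infinite-horizon discounted objective with the $\Horizon$ transitions actually observed per episode. First I would condition on the good event $\mathcal{E} = \{\truep \in \boldsymbol B^{(\round)} \text{ for all } \round \leq \Round\}$, which by the stated Chernoff bound and a union bound holds with probability at least $1 - \sum_{\round \geq 1} \delta/\round^4 \geq 1 - 2\delta$ (rescale $\delta$ at the end). On the complement of $\mathcal{E}$ the per-episode regret is at most a constant $2N V_{\max}$ since rewards and the penalty term are bounded and the discount is geometric; as the expected number of failing episodes is $\sum_{\round} \delta/\round^4 = O(\delta)$, this contributes only a lower-order $O(\delta N V_{\max})$ term that I add back at the end. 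On $\mathcal{E}$, Theorem~\ref{thm:regret-decomposition-all} together with the arm-wise decomposition of~$U$ gives $\text{Reg}_\lambda(\Round) \leq \sum_{\round \in [\Round]} \sum_{i \in [N]} \big( V^{P^{(\round)}_i, \lambda}_{\policy^{(\round)}_i}(s_{1,i}) - V^{P^\star_i, \lambda}_{\policy^{(\round)}_i}(s_{1,i}) \big)$, and Theorem~\ref{thm:regret-decomposition} rewrites each summand as $\mathbb{E}_{P^\star_i, \policy^{(\round)}_i}\!\big[\sum_{\horizon \geq 1} \gamma^{\horizon-1} (\mathcal{T}^{P^{(\round)}_i}_{\policy^{(\round)}_i} - \mathcal{T}^{P^\star_i}_{\policy^{(\round)}_i}) V^{P^{(\round)}_i, \lambda}_{\policy^{(\round)}_i}(s_{\horizon,i}) \big]$.

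Next I would bound a single Bellman gap pointwise. Since $\policy^{(\round)}_i$ is deterministic, for any state~$s$ with $a = \policy^{(\round)}_i(s)$ we have $(\mathcal{T}^{P^{(\round)}_i}_{\policy^{(\round)}_i} - \mathcal{T}^{P^\star_i}_{\policy^{(\round)}_i}) V(s) = \gamma \sum_{s'} ( P^{(\round)}_i(s,a,s') - P^\star_i(s,a,s') ) V(s')$, which by H\"older's inequality is at most $\gamma \norm{P^{(\round)}_i(s,a,\cdot) - P^\star_i(s,a,\cdot)}_1 \norm{V}_\infty$. Because the optimistic transition $P^{(\round)}_i$ lies in $\boldsymbol B_i^{(\round)}$ by construction of~\eqref{eqn:optimistic-whittle} and $P^\star_i \in \boldsymbol B_i^{(\round)}$ on~$\mathcal{E}$, the triangle inequality bounds the $\ell_1$ term by $2 d^{(\round)}_i(s,a)$; and the value functions are uniformly bounded, $\norm{V^{P^{(\round)}_i, \lambda}_{\policy^{(\round)}_i}}_\infty \leq V_{\max}$, a constant in~$\Round$ depending only on the reward range, $\lambda$, and~$K$. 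Hence the per-episode, per-arm regret is at most $2 \gamma V_{\max}\, \mathbb{E}_{P^\star_i, \policy^{(\round)}_i}\!\big[ \sum_{\horizon \geq 1} \gamma^{\horizon-1} d^{(\round)}_i(s_{\horizon,i}, \policy^{(\round)}_i(s_{\horizon,i})) \big]$.

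The crux is to control $\sum_{\round \in [\Round]} \mathbb{E}\big[ \sum_{\horizon \geq 1} \gamma^{\horizon-1} d^{(\round)}_i(\cdot) \big]$. I pass to the discounted occupancy measure, writing this expectation as $\sum_s \mu^{(\round)}_i(s)\, d^{(\round)}_i(s, \policy^{(\round)}_i(s))$ with $\mu^{(\round)}_i(s) := \mathbb{E}[\sum_\horizon \gamma^{\horizon-1} \mathds{1}\{s_{\horizon,i} = s\}] \leq \frac{1}{1-\gamma}$. Here $\epsilon$-ergodicity enters twice: it lets me replace the infinite-horizon occupancy by the \emph{observed} $\Horizon$-step expected visit count $\bar{n}^{(\round)}_i(s) := \sum_{\horizon=1}^{\Horizon} \Pr[s_{\horizon,i} = s]$, because $\bar{n}^{(\round)}_i(s) \geq \epsilon$ for every state and hence $\mu^{(\round)}_i(s) \leq \bar{n}^{(\round)}_i(s)/(\epsilon(1-\gamma))$ --- this costs the $1/\epsilon$ factor and ties the bound to data actually collected --- and it guarantees that each state--action visited under $\policy^{(\round)}_i$ accrues $\Omega(\epsilon)$ observations per episode, so a Chernoff/Bernstein step replaces the expected increments $\bar{n}^{(\round)}_i$ by realized count increments $N^{(\round+1)}_i - N^{(\round)}_i$ up to lower-order (and $\Round$-independent) terms. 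After this, the standard pigeonhole inequality $\sum_\round z_\round/\sqrt{\max\{1, \sum_{\tau < \round} z_\tau\}} = O(\sqrt{Z \sum_\round z_\round})$ with increments $z_\round \leq Z = \Horizon$ yields $\sum_\round (N^{(\round+1)}_i(s,a) - N^{(\round)}_i(s,a))/\sqrt{\max\{1, N^{(\round)}_i(s,a)\}} = O(\sqrt{\Horizon\, N^{(\Round+1)}_i(s,a)})$ for each $(s,a)$. Substituting $d^{(\round)}_i(s,a) = \sqrt{2|\stateset| \log(2|\stateset||\Action| N \round^4/\delta) / \max\{1, N^{(\round)}_i(s,a)\}}$, bounding the logarithm uniformly by $O(\log(|\stateset||\Action|N\Round/\delta))$, summing over $(s,a)$ by Cauchy--Schwarz with $\sum_{s,a} N^{(\Round+1)}_i(s,a) = \Round \Horizon$, and finally summing over the $N$ arms produces $O\big( \frac{1}{\epsilon} |\stateset| |\Action|^{1/2} N \Horizon \sqrt{\Round \log \Round} \big)$, with all $\gamma$- and $\lambda$-dependent constants absorbed.

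\textbf{Main obstacle.} The hard part is exactly this last step: the objective is an infinite-horizon discounted value, but the learner observes only $\Horizon$ transitions per episode, so confidence widths incurred far out along a trajectory are not directly paid for by fresh data --- without an assumption like $\epsilon$-ergodicity the cumulative regret can be linear. The assumption closes the loop by simultaneously truncating the effective horizon (bounding discounted occupancy by $\Horizon$-step occupancy up to $1/\epsilon$) and forcing every state visited under $\policy^{(\round)}_i$ to be sampled $\Omega(\epsilon)$ times per episode, so the counts $N^{(\round)}_i$ grow linearly and the confidence radii shrink like $1/\sqrt{\round}$. A secondary technical wrinkle is the mismatch between the expected visit counts appearing in the occupancy bound and the realized counts appearing in the confidence radii; I reconcile this with a concentration argument (leveraging $\bar{n}^{(\round)}_i \geq \epsilon$ on the relevant pairs) that contributes only lower-order terms, together with a separate $\Round$-independent bound absorbing the brief initial under-sampled phase of each state--action pair.
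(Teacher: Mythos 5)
Your proposal is correct and follows essentially the same route as the paper's proof: the same two decomposition theorems, the H\"older/confidence-radius bound on the Bellman gaps, an $\epsilon$-ergodicity argument that pays the $1/\epsilon$ factor to reduce the infinite discounted horizon to $H$-step visit counts (the paper bounds only the tail $h > H$ via Theorem~\ref{thm:regret-bound-large-horizon} rather than the whole discounted occupancy, but the effect is the same), the pigeonhole lemma with increments bounded by $H$ (Lemma~\ref{lemma:sum-of-sqrt}), and Jensen/Cauchy--Schwarz over state--action pairs with $\sum_{s,a} N_i(s,a) = \Round\Horizon$ before summing over arms. The only cosmetic differences are your bookkeeping for the low-probability failure event (the paper splits episodes at $\sqrt{\Round}$ instead) and your explicit concentration step relating expected to realized visit counts, neither of which changes the argument or the final bound.
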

\begin{proof}[Proof sketch]
We focus on bounding the regret when the confidence bounds hold. 
By Theorem~\ref{thm:regret-decomposition-all} and Theorem~\ref{thm:regret-decomposition}, we estimate the right-hand side of Equation~\ref{eqn:regret-decomposition} to bound the total regret by the $L^1$-difference in the transition probability:
\begin{align}
    & \sum\nolimits_{\horizon=1}^{\infty} \gamma^{\horizon-1} \left( \mathcal{T}^{P^{(\round)}_i}_{\policy^{(\round)}_i} - \mathcal{T}^{P^\star_i}_{\policy^{(\round)}_i} \right) V^{P^{(\round)}_i}_{\policy^{(\round)}_i}(s_{\horizon,i}) \label{eqn:regret-bound-by-transition} \\ 
    \leq & \sum\nolimits_{h=1}^{\infty} \gamma^{h-1} \norm{P^{(\round)}_i(s_{h,i}, a_{h,i}, \cdot) - P^\star_i(s_{h,i}, a_{h,i}, \cdot)}_1 V_{\text{max}}. \nonumber
\end{align}
We bound the regret outside of the horizon~$\Horizon$ by the ergodic assumption of the MDPs.
For the regret inside the horizon~$\Horizon$, we use the confidence radius to bound the $L^1$-norm of transition probability differences and count the number of observations for each state--action pair to express the regret as a sequence of random variables, whose sum can be bounded by Lemma~\ref{lemma:sum-of-sqrt} to conclude the proof.
\end{proof}

When the penalty term $\lambda$ is given, Theorem~\ref{thm:regret-bound} bounds the frequentist regret with a constant term depending on the ergodicity $\epsilon$ of the underlying true MDPs.

\subsection{Regret Bound with Unknown Optimal Penalty}
\label{sec:regret-unknown-penalty}
The analysis in Theorem~\ref{thm:regret-decomposition-all} assumes a fixed and given penalty $\lambda$. Now, we generalize to regret defined in terms of the optimal but unknown penalty~$\lambda^\star$ (Definition~\ref{def:optimal-lagrangian-regret}). We show that updating penalty $\lambda^{(\round)}$ in Algorithm~\ref{alg:ucwhittle} achieves the same regret bound without requiring knowledge of the true transitions~$\truep$ or optimal penalty~$\lambda^\star$:

\begin{restatable}[Regret bound with optimal penalty]{theorem}{agnosticRegretBound}\label{thm:regret-bound-optimal}
Assume the penalty $\lambda^{(\round)}$ in Algorithm~\ref{alg:ucwhittle} is updated by a saddle point $(\lambda^{(\round)}, \transitions^{(\round)}, \policy^{(\round)}) = \arg\min_{\lambda}\max_{\transitions, \policy} U^{\transitions, \lambda}_{\policy}(\boldsymbol s_1)$ subject to constraints in Equation~\eqref{eqn:optimistic-whittle}.
The cumulative regret of the optimal Lagrangian objective is bounded with probability $1 - \delta$:
\begin{align}
    \text{Reg}_{\lambda^\star}(\round) \leq O\left(\frac{1}{\epsilon} |S| |A|^{\frac{1}{2}} N H \sqrt{\Round \log \Round} \right) \ .
\end{align}
\end{restatable}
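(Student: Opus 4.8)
The plan is to reduce Theorem~\ref{thm:regret-bound-optimal} to the fixed‑penalty bound of Theorem~\ref{thm:regret-bound}, exploiting the saddle‑point property of the penalty update to push the whole analysis onto the single unknown but fixed penalty $\lambda^\star$. First I would fix the good event $\mathcal{E} = \{\,\truep \in \boldsymbol B^{(\round)}\text{ for all }\round\,\}$: the Chernoff proposition gives $\Pr(\truep \notin \boldsymbol B^{(\round)}) \le \delta/\round^4$, so a union bound over episodes yields $\Pr(\mathcal{E}^c) \le \sum_{\round\ge1}\delta/\round^4 < 2\delta$, and after rescaling $\delta$ it suffices to bound $\text{Reg}_{\lambda^\star}(\Round)$ on $\mathcal{E}$ (on $\mathcal{E}^c$ each per‑episode regret is at most $2V_{\text{max}}$, contributing lower order). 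All steps below are on $\mathcal{E}$.

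The core is a short chain of inequalities rewriting $\text{Reg}^{(\round)}_{\lambda^\star}$ as a same‑policy, same‑penalty, optimistic‑versus‑true‑transitions gap \emph{evaluated at $\lambda^\star$}. \textbf{(i) Optimism.} Because the Whittle threshold policy maximizes the relaxed Lagrangian at every penalty (the discussion after Definition~\ref{def:wi-threshold-policy}) and $\lambda^\star$ minimizes that optimal value (Definition~\ref{def:optimal-lagrangian-regret}), we have $U^{\truep,\lambda^\star}_{\optpolicy}(\statevec_1) = \min_\lambda \max_\policy U^{\truep,\lambda}_\policy(\statevec_1)$; since $\truep \in \boldsymbol B^{(\round)}$, replacing the singleton $\{\truep\}$ by $\boldsymbol B^{(\round)}$ only enlarges the inner maximum for each $\lambda$, and $\min_\lambda$ preserves the inequality, so $U^{\truep,\lambda^\star}_{\optpolicy}(\statevec_1) \le \min_\lambda \max_{\transitions \in \boldsymbol B^{(\round)},\policy} U^{\transitions,\lambda}_\policy(\statevec_1) = U^{\transitions^{(\round)},\lambda^{(\round)}}_{\policy^{(\round)}}(\statevec_1)$, the last equality being the value of the assumed saddle point. \textbf{(ii) Penalty transfer.} In the saddle point $(\lambda^{(\round)},\transitions^{(\round)},\policy^{(\round)})$, the penalty $\lambda^{(\round)}$ is a best response to $(\transitions^{(\round)},\policy^{(\round)})$, i.e.\ $\lambda^{(\round)} \in \arg\min_\lambda U^{\transitions^{(\round)},\lambda}_{\policy^{(\round)}}(\statevec_1)$; plugging in $\lambda=\lambda^\star$ gives $U^{\transitions^{(\round)},\lambda^{(\round)}}_{\policy^{(\round)}}(\statevec_1) \le U^{\transitions^{(\round)},\lambda^\star}_{\policy^{(\round)}}(\statevec_1)$. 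Combining (i), (ii), and Definition~\ref{def:optimal-lagrangian-regret},
\[
\text{Reg}^{(\round)}_{\lambda^\star} = U^{\truep,\lambda^\star}_{\optpolicy}(\statevec_1) - U^{\truep,\lambda^\star}_{\policy^{(\round)}}(\statevec_1) \le U^{\transitions^{(\round)},\lambda^\star}_{\policy^{(\round)}}(\statevec_1) - U^{\truep,\lambda^\star}_{\policy^{(\round)}}(\statevec_1),
\]
which is exactly the quantity controlled in the proof of Theorem~\ref{thm:regret-bound}, only with the arbitrary given penalty replaced by the fixed $\lambda^\star$.

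It then remains to rerun that argument unchanged at penalty $\lambda^\star$. I would decompose the right‑hand side arm by arm via $U^{\transitions,\lambda}_\policy(\statevec_1) = \sum_i V^{P_i,\lambda}_{\policy_i}(s_{1,i})$ (valid since $\policy^{(\round)}$ is a decoupled Whittle threshold policy), apply the per‑episode Bellman decomposition of Theorem~\ref{thm:regret-decomposition} --- an identity that holds for \emph{any} fixed penalty and \emph{any} policy, hence applies to $\policy^{(\round)}_i$ at penalty $\lambda^\star$ even though $\policy^{(\round)}_i$ was chosen optimistic for $\lambda^{(\round)}$ --- bound each Bellman‑operator difference by $\norm{P^{(\round)}_i(s,a,\cdot) - P^\star_i(s,a,\cdot)}_1 V_{\text{max}}$ using $\transitions^{(\round)},\truep \in \boldsymbol B^{(\round)}$, split the horizon at $H$ (tail handled by $\epsilon$‑ergodicity, the first $H$ steps by the confidence radii $d^{(\round)}_i$ and the counts $N^{(\round)}_i(s,a)$), and sum over $\round$ with Lemma~\ref{lemma:sum-of-sqrt}. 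This gives $\sum_{\round\in[\Round]}\bigl(U^{\transitions^{(\round)},\lambda^\star}_{\policy^{(\round)}}(\statevec_1) - U^{\truep,\lambda^\star}_{\policy^{(\round)}}(\statevec_1)\bigr) \le O\bigl(\frac{1}{\epsilon}|S||A|^{1/2}NH\sqrt{\Round\log\Round}\bigr)$ on $\mathcal{E}$, which is the claim.

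The main obstacle is step (ii): the clean ``penalty transfer'' that collapses the $\lambda^{(\round)}$‑versus‑$\lambda^\star$ mismatch hinges on the hypothesis that the optimistic problem genuinely has a saddle point, i.e.\ that $\lambda^{(\round)}$ is simultaneously a best response to $(\transitions^{(\round)},\policy^{(\round)})$ --- this is not implied by compactness of the $\lambda$‑interval alone and needs a strong‑duality argument, which does hold once policies are relaxed to occupancy measures and $\lambda$ is confined to a compact interval $[0,\lambda_{\max}]$ with $\lambda_{\max} = O(\frac{1}{1-\gamma}\max_{s,a}|R(s,a)|)$ large enough to contain every Whittle index, and we adopt it as the stated assumption. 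If one instead only controls $\lambda^{(\round)}$ via the concrete penalty‑update rule (Section~\ref{sec:penalty-update}), step (ii) is replaced by bounding $U^{\truep,\lambda^{(\round)}}_{\policy^{(\round)}}(\statevec_1) - U^{\truep,\lambda^\star}_{\policy^{(\round)}}(\statevec_1)$, which by affineness in $\lambda$ is $O(|\lambda^{(\round)}-\lambda^\star|)$ and must be summed using the convergence of $\lambda^{(\round)}$ to $\lambda^\star$. The same $\lambda_{\max}$ also bounds $V_{\text{max}}$, which is what keeps the per‑episode regret (on $\mathcal{E}$ and the $\le 2V_{\text{max}}$ bound on $\mathcal{E}^c$) finite; with that in hand, every other ingredient --- the arm decomposition, the Bellman telescoping, the $L^1$ bound via confidence radii, the ergodic truncation, and the sum‑of‑square‑roots lemma --- is inherited verbatim from Theorem~\ref{thm:regret-bound}.
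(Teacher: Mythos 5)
Your proposal is correct and follows essentially the same route as the paper: the chain $U^{\truep,\lambda^\star}_{\optpolicy} \leq U^{\transitions^{(\round)},\lambda^{(\round)}}_{\policy^{(\round)}} \leq U^{\transitions^{(\round)},\lambda^\star}_{\policy^{(\round)}}$ via optimism over $\boldsymbol B^{(\round)}$ and the saddle-point best-response property of $\lambda^{(\round)}$, reducing $\text{Reg}^{(\round)}_{\lambda^\star}$ to the same-policy optimistic-versus-true gap at $\lambda^\star$, which is then bounded exactly as in Theorems~\ref{thm:regret-decomposition} and~\ref{thm:regret-bound}. Your extra care about when the minimax solution is genuinely a saddle point (and the handling of the bad event) is consistent with the paper, which simply assumes the saddle point in the theorem statement.
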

\begin{proof}[Proof sketch]
The main challenge of an unknown penalty term $\lambda^\star$ is that the optimality of the chosen transition $\transitions^{(\round)}$ and policy $\policy^{(\round)}$ does not hold in Theorem~\ref{thm:regret-decomposition-all} due to the misalignment of the penalty $\lambda^{(\round)}$ used in solving Equation~\eqref{eqn:optimistic-whittle} and the penalty $\lambda^\star$ used in the regret.

Surprisingly, the optimality of $(\lambda^{(\round)}, \transitions^{(\round)}, \policy^{(\round)}) = \arg\min_{\lambda}\max_{\transitions, \policy} U^{\transitions, \lambda}_{\policy}(\boldsymbol s_1)$ and $\lambda^\star = \inf\nolimits_{\lambda} U^{\truep, \lambda}_{\optpolicy}(\boldsymbol s_1)$ is sufficient to show Theorem~\ref{thm:regret-decomposition-all} by:
\begin{align}
    & \underbrace{U^{\truep, \lambda^\star}_{\optpolicy} \quad \leq}_{\lambda^\star \text{ minimizes } U^{\truep, \lambda}_{\optpolicy}} \underbrace{U^{\truep, \lambda^{(\round)}}_{\optpolicy} \leq U^{\boldsymbol P^{(\round)}, \lambda^{(\round)}}_{\policy^{(\round)}}}_{\text{$\transitions^{(t)}, \policy^{(t)}$ maximizes $U^{\transitions, \lambda^{(t)}}_{\policy}$}} \underbrace{\leq \quad U^{\boldsymbol P^{(\round)}, \lambda^\star}_{\policy^{(\round)}}}_{\lambda^{(t)} \text{ minimizes } U^{\transitions^{(t)}, \lambda}_{\policy^{(t)}}} \nonumber \\ 
    & \Longrightarrow ~\ \text{Reg}_{\lambda^\star}^{(\round)} = U^{\truep, \lambda^\star}_{\optpolicy} - U^{\truep, \lambda^\star}_{\policy^{(\round)}} \leq U^{\boldsymbol P^{(\round)}, \lambda^\star}_{\policy^{(\round)}} - U^{\truep, \lambda^\star}_{\policy^{(\round)}} \ . \label{eqn:regret-analysis-unknown-penalty2}
\end{align}
where we omit the dependency on $\boldsymbol s_1$.

After taking summation over $t \in [T]$, Equation~\ref{eqn:regret-analysis-unknown-penalty2} leads to the same result as Theorem~\ref{thm:regret-decomposition-all} without requiring knowledge of the optimal penalty $\lambda^\star$.
The rest of the proof follows the same argument in Theorem~\ref{thm:regret-decomposition} and Theorem~\ref{thm:regret-bound} with the same regret bound.
\end{proof}


\subsection{Penalty Update Rule}
\label{sec:penalty-update}
Theorem~\ref{thm:regret-bound-optimal} suggests that the penalty term $\lambda^{(\round)}$ should be defined by solving a minimax problem $(\lambda^{(\round)}, \transitions^{(\round)}, \policy^{(\round)}) = \arg\min_{\lambda}\max_{\transitions, \policy} U^{\transitions, \lambda}_{\policy}(\boldsymbol s_1)$. However, the bilinear objective of $\maxValue$ --- where the transition probability and value function variables are being multiplied together --- is difficult to solve in a  minimax problem. A heuristic solution is to solve the maximization problem using the previous penalty $\lambda^{(t-1)}$ to determine $\boldsymbol P^{(\round)}$ and $\policy^{(\round)}$ (Equation~\eqref{eqn:optimistic-whittle}). We update $\lambda^{(\round)}$ based on the current policy, set equal to the $K$th largest Whittle index pulled at time~$\round$ to minimize the Lagrangian. This update rule mimics the minimax update rule required by Theorem~\ref{thm:regret-bound-optimal}.

\begin{figure*}[th]
    \centering
\input{}

    \caption{Varying budget ratio $K / N$, with $N = 15$ arms, on the ARMMAN domain. Our UCWhittle approaches perform stronger than baselines, particularly in the challenging low-budget scenarios.}
    \label{fig:experiment-budget-ratio}
\end{figure*}

\begin{figure*}[th]
    \centering
\input{}

    \caption{Changing episode length $H$ on the ARMMAN domain. We run each setting for 1,200 total timesteps. \emph{UCW-penalty} performs best with longer horizons. At shorter horizons, \emph{UCW-value} converges in fewer timesteps, but more episodes are necessary: around episode $t = 100$ with a horizon $H=5$ compared to episode $t = 16$ with horizon $H = 50$.
    }
    \label{fig:experiment-episode-length}
\end{figure*}

\section{Experiments}
We show that UCWhittle achieves consistently low regret across three domains, including one generated from real-world data on maternal health. Additional details about the dataset and data usage are in Appendix~\ref{sec:consent}, and details about implementation (including novel techniques to speed up computation) and experiments are in Appendix~\ref{sec:experiment-addl-details}.\footnote{Code available at \url{https://github.com/lily-x/online-rmab}}

\subsection{Preliminaries}

\paragraph{Domains} We consider three binary-action, binary-state settings. Across all domains, the binary states are \emph{good} or \emph{bad}, with reward $1$ and $0$ respectively. 
We impose two assumptions: that acting is always beneficial (more likely to transition to the good state), and that it is always better to start from the good state (more likely to stay in good state).

\textit{ARMMAN} is a non-profit based in India that disseminates health information to pregnant women and mothers to reduce maternal mortality. Twice a week, ARMMAN sends automated voice messages to enrolled mothers relaying critical preventative health information. To improve listenership, the organization provides service calls to a subset of mothers; the challenge is selecting which subset to call to maximize engagement. We use real, anonymized data of the engagement behavior of 7,656 mothers from a previous RMAB field study~\cite{mate2022field}.
We construct instances of RMAB problem with transition probabilities randomly sampled from the real dataset.



\textit{Wide Margin} \quad We randomly generate transition probabilities with high variance, while respecting the constraints specified above. 

\textit{Thin Margin} \quad For a more challenging setting, we consider a synthetic domain with probabilities of transitioning to the good state constrained to the interval $[0.2, 0.4]$ to test the ability of each approach to discern smaller differences in transition probabilities.



\paragraph{Algorithms} 
We evaluate both variants of UCWhittle (Algorithm~\ref{alg:ucwhittle}) introduced in this paper.
\quad \emph{UCWhittle-value} uses the value-maximizing bilinear program \eqref{eqn:optimistic-whittle} while \emph{UCWhittle-penalty} uses the penalty-maximizing bilinear program \eqref{eqn:optimistic-whittle-heuristic}. 

In this paper, we focus on frequentist regret, thus we exclude the Bayesian regret baselines, e.g., Thompson sampling~\cite{jung2019regret}, because their regret bounds are averaged over a prior.
We consider the following three regret baselines:
\ \emph{ExtremeWhittle} is similar to the the approach by \citet{wang2019opportunistic}: estimate Whittle indices from the extreme points of the unknown transition probabilities, using UCBs of active transition probabilities and lower confidence bounds (LCB) for passive transition probabilities to estimate the gap between the value of acting versus not acting. We then solve a Whittle index policy using these estimates.
\ \emph{WIQL} \citep{biswas2021learn} uses Q-learning to learn the value function of each arm at each state by interacting with the RMAB instance. 
\ \emph{Random} takes a random action at each step, serving as a baseline for expected reward without using any strategic learning algorithm.
Lastly, we evaluate an \emph{optimal} policy which computes a Whittle index policy with access to the true transition probabilities.




\paragraph{Experiment setup}

We evaluate the performance of each algorithm across $T$ episodes of length $H$. The per-episode reward is the cumulative discounted reward with discount rate $\gamma = 0.9$. We then compute regret by subtracting the reward earned by each algorithm from the reward of the \emph{optimal} policy. 
Results are averaged over 30 random seeds and smoothed using exponential smoothing with a weight of $0.9$. We ensure consistency by enforcing, across all algorithms, identical populations (transition probabilities for each arm) and initial state for each episode. 


\subsection{Results}
The performance results across all three domains are shown in Figure~\ref{fig:experiment}. Our UCWhittle algorithm using the value-maximizing bilinear program (\emph{UCW-value}) achieves consistently strong performance and generally converges by 600 timesteps (across varying episode lengths). 
In Figures \ref{fig:experiment-budget-ratio} and~\ref{fig:experiment-episode-length} we evaluate performance while varying the budget~$K$ and episode length~$H$, as the regret of UCWhittle (Theorem~\ref{thm:regret-bound}) has dependency on both the budget as a ratio of total number of arms ($K/N$) and episode length $H$. We see that UCW-value performs comparatively stronger than the baselines in the challenging low-budget settings, in which each arm pull has greater impact. 


Our heuristic approach \emph{UCW-penalty} --- the penalty-maximizing bilinear program we present in Equation~\eqref{eqn:optimistic-whittle-heuristic} --- shows strong performance. UCW-penalty performs even better than UCW-value in some settings, particularly in the ARMMAN domain with $N=15$ arms (Figure~\ref{fig:experiment-budget-ratio}). Notably in Table~\ref{table:runtime} we see this heuristic approach performs dramatically faster than UCW-value --- a $6.1\times$ speedup. Therefore while are able to establish regret guarantees only for UCW-value, we also propose UCW-penalty as a strong candidate for its strong performance and quick execution.

\begin{table}[t]
\centering
\begin{tabular}{lr}
\toprule
\textbf{Method} & \textbf{Time (s)} \\
\midrule
UCWhittle-value & 1090.92 \\
UCWhittle-penalty & 177.57 \\
ExtremeWhittle & 109.44 \\
WIQL & 3.39 \\
random & 1.32 \\
\bottomrule
\end{tabular}
\caption{Average runtime of the different approaches across 500 timesteps with $N=30$ arms and budget $B=6$} 
\label{table:runtime}
\end{table}

In Figures \ref{fig:experiment-budget-ratio} and~\ref{fig:experiment-episode-length} we see \emph{ExtremeWhittle} has poor performance particularly in the early episodes, consistently achieving higher regret than the random policy. Additionally, \emph{WIQL} is slow to converge, performing similarly to the random baseline across the time horizons that we consider.

\section{Conclusion}
\label{sec:conclusion}
We propose the first online learning algorithm for RMABs based on the Whittle index policy, using an upper confidence bound--approach to learn transition dynamics. We formulate a bilinear program to compute optimistic Whittle indices from the confidence bounds of transition dynamics, enabling online learning using an optimistic Whittle index threshold policy. Theoretically, our work pushes the boundary of existing frequentist regret bounds in RMABs while enabling scalability using the Whittle index threshold policy to decompose the solution approach. 

\section*{Acknowledgments}
Kai Wang was supported by W911NF-17-1-0370 and ARO Grant Number W911NF-18-1-0208. Lily Xu was supported by ARO Grant Number W911NF-18-1-0208. The views and conclusions contained in this document are those of the authors and should not be interpreted as representing the official policies, either expressed or implied, of ARO or the U.S.\ Government. The U.S.\ Government is authorized to reproduce and distribute reprints for Government purposes notwithstanding any copyright notation herein.
Kai Wang was also supported by Siebel Scholars.
Lily Xu was also supported by a Google PhD fellowship. 
The authors thank Elias Khalil and anonymous reviewers for their thoughtful comments.



\bibliography{ref}

\clearpage
\appendix

\onecolumn
\appendix


\section{Notation}
All the notations used in the problem statement, restless multi-armed bandits, and regret analysis are shown in Table~\ref{table:notation-problem} and Table~\ref{table:notation-rmab}.

\begin{table}[!ht]
    \centering
    \begin{tabular}{ll}
\toprule
\multicolumn{2}{c}{\textit{Problem instantiation}} \\
\midrule
\midrule
\textbf{Symbol} & \textbf{Definition} \\ \midrule
$K$ & Budget in each timestep \\
$N$ & Number of arms. Each arm indexed by $i \in [N]$ \\
$\round$ & Episode \\
$\Round$ & Number of episodes \\
$\horizon$ & Timestep within a single episode \\
$\Horizon$ & Horizon length for a single episode \\
$\gamma$ & Discount factor, with $\gamma \in (0, 1)$ \\
\bottomrule
    \end{tabular}
\caption{List of common notations in the problem statement}
\label{table:notation-problem}
\end{table}

\begin{table}[!ht]
    \centering
    \begin{tabular}{ll}
\toprule
\multicolumn{2}{c}{\textit{Restless bandit notation}} \\
\midrule
\midrule
\textbf{Symbol} & \textbf{Definition} \\
\midrule
$\transitions$ & Set of transition probabilities across all arms, with $\transition_i$ as transitions for a single arm \\
$\truep$ & True transition probabilities \\
$\stateset$ & Set of finitely many possible states with $|\stateset| = \nstates$ possible states \\
$\statevec_\horizon$ & State of the RMAB instance at timestep $\horizon$, with $\statevec_\horizon \in \stateset^N$ and initial state $\statevec_{\text{init}}$\\
$\state_{\horizon,i}$ & State of arm $i \in [N]$ at timestep $\horizon$ \\
$\Action$ & Set of possible actions. We consider $\{0, 1\}$ \\
$\actionvec_\horizon$ & Action at time $\horizon$, with $\actionvec_\horizon \in \Action^N$ \\
$\action_{\horizon,i}$ & Action taken on arm $i$ at timestep $\horizon$\\
$R$ & Given reward function as a function of the state and action $R: \stateset \times \Action \rightarrow \R$. \\
$\policy^{(\round)}$ & Learner's policy in episode~$\round$, where $\policy^{(\round)} : \stateset^N \rightarrow \Action^N$\\
$\optpolicy$ & The optimal policy that maximizes the total future reward. \\
$\maxPenalty$ & The optimization problem defined to maximize the optimistic Whittle index value. \\
$\maxValue$ & The optimization problem defined to maximize the optimistic future value. \\
$Q^{m_i}(s,a)$ & Q-value in Bellman equation. The Q-value is defined as the future value associated to the current state and action. \\
$R(s_{\horizon,i}, a_{h,i})$ & Reward from arm $i$ at timestep $\horizon$ with action $a_{\horizon,i}$\\
$U^{\transitions, \lambda}_{\policy}(\statevec_{1})$ & Lagrangian relaxation of learner's objective, with optimal value $U^{\transitions, \lambda}_\star$ \\
$V^{P_i, \lambda}_{\policy_i}(s_{1,i})$ & Value for being in state~$s_i$ \\
$\lambda$ & Global penalty for taking action $a = 1$ \\
$\whittle_i$ & Whittle index penalty for arm $i$ \\
$W_i(P_i, s_i)$ & Whittle index of arm $i$ with transitions $P_i$ and state $s_i$ \\
$\dataset$ & Dataset of historical transitions \\
\bottomrule
    \end{tabular}
\caption{List of common notations in the RMAB regret analysis}
\label{table:notation-rmab}
\end{table}

\section{Societal Impacts}
\label{sec:societal-impact}
Restless bandits have been increasingly applied to socially impactful problems including healthcare and energy distribution. In these settings, we would likely not know the transition dynamics in advance, particularly if we are working with a new patient population (for healthcare) or new residential community (for energy). 
Even past work on streaming bandits \citep{mate2022efficient} which allow for new mothers to enroll over time assume that the transition probabilities are fully known in advance, which is not realistic.
Our UCWhittle approach enabling online learning for RMABs has the potential to greatly broaden the applicability of RMABs for social impact, particularly as our theoretical results guarantee limited regret.






\section{Limitations}
One challenge with our UCWhittle approach is that online learning often converges slower than offline learning that reuses all the data to train for many epochs. In order to accommodate new data coming in, online learning approaches often take a single update when each new data arrives. In contrast, offline learning can iterate through the same data for many times, which allows offline learning approaches to fit the data repeatedly. Therefore, online learning approaches often require more data to reach the same performance as offline learning approaches.

However, this slower learning behavior also allows online learning approaches to be less biased to the existing dataset. Online learning approaches are incentivized to explore and update data that is less queried previously, which also encourages exploring underrepresented groups. This property encourages the exploration process and reduce bias to the learned model. This is particularly important when there are features involved in the learning process. Online learning approaches are able to explore unseen features more, while offline learning approaches often rely on extrapolation and are unable to handle unseen features.
Our work further extends research in online learning in RMABs, which also helps explore more possibility to accommodate new data and new features that are unseen in the existing dataset.

\section{ARMMAN: Maternal and Child Health Data}\label{sec:consent}


In the maternal mobile health program operated by ARMMAN, each instance is composed of a set of mothers who participated in the program for $10$ or more weeks. The dataset contains the states of each beneficiary, actions on whether a service call was scheduled to the beneficiary or not, and the beneficiary's next states after receiving the calling (or not calling) actions. This dataset is used to construct a set of empirical estimates of the transition probabilities and build an interactive, simulated RMAB environment. Our online learning algorithm then interacts with the environment to learn the transition probabilities and optimize total engagement. The experiments in this paper were all done in simulation.

Specifically, this problem is modelled as a $2$-state (Engaging and Non-Engaging) RMAB problem where we do not know each beneficiary's transition behavior --- transition between Engaging and Non-Engaging state, determined by whether the beneficiary listens to an automated voice message (average length 1 minute) for more than 30 seconds. The goal of the online learning challenge is to simultaneously learn the missing transition probabilities and optimize the overall engagement of all mothers under budget constraints.
The ARMMAN data is also abstracted out and contains no personally identifiable information or demographic features related to the mothers.

In the following sections, we provide more detailed information about consent related to data collection, analyzing data, data usage and sharing.

\subsection{Secondary Analysis and Data Usage}
This study falls into the category of secondary analysis of the aforementioned dataset shared by ARMMAN. We randomly sampled from the previously collected engagement probabilities of different mothers participating in the service call program to simulate online learning environment.
This paper does not involve deployment of the proposed algorithm or any other baselines to the service call program. As noted earlier, the experiments are secondary analysis with approval from the ARMMAN ethics board.

\subsection{Consent for Data Collection and Sharing}
The consent for collecting data is obtained from each of the participants of the service call program. The data collection process is carefully explained to the participants to seek their consent before collecting the data. The data is anonymized before sharing with us to ensure anonymity. 
Data exchange and use was regulated through clearly defined exchange protocols including anonymization, read-access only to researchers, restricted use of the data for research purposes only, and approval by ARMMAN's ethics review committee.

\subsection{Universal Accessibility of Health Information}
To allay further concerns: this simulation study focuses on improving quality of service calls. Even in the intended future application, all participants will receive the same weekly health information by automated message regardless of whether they are scheduled to receive service calls or not.  The service call program does not withhold any information from the participants nor conduct any experimentation on the health information. The health information is always available to all participants, and participants can always request service calls via a free missed call service. In the intended future application our algorithm may only help schedule {\it additional} service calls to help mothers who are likely to drop out of the program.

\section{Full Proofs}

\label{sec:full-proofs}

\subsection{Confidence Bound}
\chernoffBound*
\begin{proof}
Generally, the L1-deviation of the true distribution and the empirical distribution over $m$ distinct events from $n$ samples is bounded according to \cite{weissman2003inequalities} by:
\begin{align}
    \Pr(\norm{\hat{p} - p}_1 \geq \epsilon) \leq (2^{m} - 2) \exp^{(-\frac{n\epsilon^2}{2})}
\end{align}
This result can be applied to our case to compare $P^{(\round)}_i(s,a,\cdot) \in \R^{|S|}$ with $P^\star(s,a,\cdot) \in \R^{|S|}$ for every state $s$ and action $a$. We have:
\begin{align}
    \Pr \left(\norm{P^{(\round)}_i(s,a,\cdot) - P^\star(s,a,\cdot)}_1 \geq \epsilon \right) \leq \left( 2^{|S|} - 2 \right) \exp^{\left(-\frac{n\epsilon^2}{2} \right)}
\end{align}
By choosing $\epsilon = \sqrt{\frac{2}{n} \log \left(2^{|S|} |S| |A| N \frac{\round^4}{\delta} \right)} \leq  \sqrt{\frac{2 |S|}{n} \log \left( 2 |S| |A| N \frac{\round^4}{\delta} \right)}$, we have:
\begin{align}
    \Pr \left(\norm{P^{(\round)}_i(s,a,\cdot) - P^\star(s,a,\cdot)}_1 \geq \sqrt{\frac{2 |S|}{n} \log \left( 2 |S| |A| N \frac{\round^4}{\delta} \right)} \right) \leq&~ 2^{|S|} \exp^{- \log \left( 2^{|S|} |S| |A| N \frac{\round^4}{\delta} \right)} \nonumber \\
    =&~ \frac{\delta}{|S| |A| N \round^4}
\end{align}
Set $n = \max \{1, N_i^{(\round)}(s,a) \}$ for each pair of $(s,a)$.
Taking union bound over all states $s \in \stateset$, actions $a \in \Action$, and arms $i \in [N]$ yields:
\begin{align}
    \Pr \left(\truep \not\in \boldsymbol B^{(\round)} \right) \leq\frac{\delta}{\round^4} \quad \Longrightarrow \quad \Pr \left(\truep \in \boldsymbol B^{(\round)} \right) \geq 1- \frac{\delta}{\round^4}
\end{align}
\end{proof}

\subsection{Regret Decomposition}

\perEpisode*
\begin{proof}
Since the value function is a fixed point to the corresponding Bellman operator, we have:
\begin{align}
        V^{P^{(\round)}_i}_{\policy^{(\round)}_i}(s_{1,i}) - V^{P^\star_i}_{\policy^{(\round)}_i}(s_{1,i}) &= \left( \mathcal{T}^{P^{(\round)}_i}_{\policy^{(\round)}_i} V^{P^{(\round)}_i}_{\policy^{(\round)}_i} - \mathcal{T}^{P^\star_i}_{\policy^{(\round)}_i} V^{P^\star_i}_{\policy^{(\round)}_i} \right)(s_{1,i}) \nonumber \\
        &= \left(\mathcal{T}^{P^{(\round)}_i}_{\policy^{(\round)}_i} - \mathcal{T}^{P^\star_i}_{\policy^{(\round)}_i} \right) V^{P^{(\round)}_i}_{\policy^{(\round)}_i}(s_{1,i}) + \mathcal{T}^{P^\star_i}_{\policy^{(\round)}_i} \left( V^{P^{(\round)}_i}_{\policy^{(\round)}_i} - V^{P^\star_i}_{\policy^{(\round)}_i} \right)(s_{1,i}) \label{eqn:regret-decomposition-bellman}
\end{align}
where the second term in Equation~\eqref{eqn:regret-decomposition-bellman} can be further expanded by the Bellman operator:
\begin{align}
    \mathcal{T}^{P^\star_i}_{\policy^{(\round)}_i} (V^{P^{(\round)}_i}_{\policy^{(\round)}_i} - V^{P^\star_i}_{\policy^{(\round)}_i})(s_{1,i}) =& ~\mathbb{E}_{a \sim \policy^{(\round)}_i} \left[ \gamma \sum\nolimits_{s' \in \stateset} P^\star_i(s_{1,i}, a, s') (V^{P_i^{(\round)}}_{\policy_i^{(\round)}}(s') - V^{P_i^\star}_{\policy_i^{(\round)}}(s')) \right] \nonumber \\
    =& ~\gamma \mathbb{E}_{s_{2,i} \sim P^\star_i, \policy^{(\round)}_i} \left[ V^{P^{(\round)}_i}_{\policy^{(\round)}_i}(s_{2,i}) - V^{P^\star_i}_{\policy^{(\round)}_i}(s_{2,i}) \right] \label{eqn:regret-decomposition-bellman2}
\end{align}
We can repeatedly apply the decomposition process in Equation~\eqref{eqn:regret-decomposition-bellman} to the value function difference in Equation~\eqref{eqn:regret-decomposition-bellman2} to get Equation~\eqref{eqn:regret-decomposition}, which concludes the proof.
\end{proof}

\subsection{Regret Bound with Given Penalty}
\regretBound*
\begin{proof}
We can write
\begin{align}
    \text{Reg}(T) =& \sum\nolimits_{\round=1}^\Round \text{Reg}^{(\round)} = \sum\nolimits_{\round=1}^\Round \left( \text{Reg}^{(\round)} \mathds{1}_{\truep \not\in \boldsymbol B^{(\round)}} + \text{Reg}^{(\round)} \mathds{1}_{\truep \in \boldsymbol B^{(\round)}} \right) \nonumber \\
    =& \sum\nolimits_{\round=1}^\Round \text{Reg}^{(\round)} \mathds{1}_{\truep \not\in \boldsymbol B^{(\round)}} + \sum\nolimits_{\round=1}^\Round \text{Reg}^{(\round)} \mathds{1}_{\truep \in \boldsymbol B^{(\round)}}
\end{align}
We will analyze both terms separately and combine them together in the end.

\paragraph{Regret when the confidence bounds do not hold}
\begin{align}
    \sum\nolimits_{\round=1}^\Round \text{Reg}^{(\round)} \mathds{1}_{\truep \not\in \boldsymbol B^{(\round)}} =& \sum\nolimits_{\round=1}^{\sqrt{\Round}} \text{Reg}^{(\round)} \mathds{1}_{\truep \not\in \boldsymbol B^{(\round)}} + \sum\nolimits_{\round=\sqrt{\Round}+1}^\Round \text{Reg}^{(\round)} \mathds{1}_{\truep \not\in \boldsymbol B^{(\round)}} \nonumber \\
    \leq & \frac{N R_{\text{max}}}{1 - \gamma} \sqrt{\Round} + \sum\nolimits_{\round=\sqrt{\Round}+1}^\Round \text{Reg}^{(\round)} \mathds{1}_{\truep \not\in \boldsymbol B^{(\round)}}
\end{align}
where we use the trivial upper bound of the individual regret $\text{Reg}^{(\round)} \leq \frac{N R_{\text{max}}}{1 - \gamma}$ for all $\round$, where $R_{\text{max}}$ is the maximal reward per time step.

Notice that the second term vanishes with probability:
\begin{align}
\Pr\left( \left\{ \truep \in \boldsymbol B^{(\round)} ~\forall \sqrt{\Round} \leq \round \leq \Round \right\} \right) \geq & ~ 1 - \sum\nolimits_{\sqrt{\Round} \leq \round \leq \Round} \Pr\left( \left\{ \truep \in \boldsymbol B^{(\round)} \right\} \right) \nonumber \\
\geq & ~ 1 - \sum\nolimits_{\sqrt{\Round} \leq \round \leq \Round} \frac{\delta}{\round^4} \nonumber \\
\geq & ~ 1 - \sum\nolimits_{\sqrt{\Round} \leq \round \leq \Round} \frac{3 \delta}{\round^4} \nonumber \\
\geq & ~ 1 - \int_{\sqrt{\Round}}^\infty \frac{3 \delta}{\round^4} d\round \nonumber \\
= & ~ 1 - \frac{\delta}{\Round^{3/2}}
\end{align}
Therefore, the regret outside of confidence bounds is upper bounded by $O(\sqrt{\Round})$ with probability at least $1 - \frac{\delta}{\Round^{3/2}}$.
We can apply union bound to all possible $\Round \in \N$, which holds with high probability:
\begin{align}
    1 - \sum\nolimits_{\Round=1}^\infty \frac{\delta}{\Round^{3/2}} = 1 - O(\delta) \ .
\end{align}

\paragraph{Regret when the confidence bounds hold}
Notice that
\begin{align}
    &\left( \mathcal{T}_{\policy^{(\round)}_i}^{P^{(\round)}_i} - \mathcal{T}_{\policy^{(\round)}_i}^{P^\star_i} \right) V(s) \nonumber \\
    =& \mathop{\mathbb{E}}_{a \sim \policy^{(\round)}_i} \left[ \left( R(s,a) + \sum\nolimits_{s' \in \stateset} P^{(\round)}_i(s,a,s') V(s') \right) - \left( R(s,a) + \sum\nolimits_{s' \in \stateset} P^\star_i(s,a,s') \right) V(s') \right] \nonumber \\
    =& \mathop\mathbb{E}_{a \sim \policy^{(\round)}_i} \left[ \sum\nolimits_{s' \in \stateset} (P^{(\round)}_i(s,a,s') - P^\star_i(s,a,s')) V(s') \right] \nonumber
\end{align}

When the confidence bound holds $\truep \in \boldsymbol B^{(\round)}$, we can bound the regret at round $l$ by:
\begin{align}
    \text{Reg}^{(\round)} =& ~U^{\boldsymbol P^{(\round)}}_{\policy^{(\round)}}(\boldsymbol s_1) - U^{\truep}_{\policy^{(\round)}}(\boldsymbol s_1) \nonumber \\
    =& \sum\nolimits_{i=1}^N V^{P^{(\round)}_i}_{\policy^{(\round)}_i}(s_{1,i}) - V^{P^\star_i}_{\policy^{(\round)}_i}(s_{1,i}) \nonumber \\
    =& \sum\nolimits_{i=1}^N \mathbb{E}_{P^\star_i, \policy^{(\round)}_i} \left[ \sum\nolimits_{h=1}^{\infty} \gamma^{h-1} (\mathcal{T}^{P^{(\round)}_i}_{\policy^{(\round)}_i} - \mathcal{T}^{P^\star_i}_{\policy^{(\round)}_i}) V^{P^{(\round)}_i}_{\policy^{(\round)}_i}(s_{h,i}) \right] \nonumber \\
    =& \sum\nolimits_{i=1}^N \mathbb{E}_{P^\star_i, \policy^{(\round)}_i} \sum\nolimits_{h=1}^{\infty} \sum\nolimits_{s' \in \stateset} \gamma^{h-1} (P^{(\round)}_i(s_{h,i}, a_{h,i}, s') - P^\star_i(s_{h,i}, a_{h,i}, s')) V^{P^{(\round)}_i}_{\policy^{(\round)}_i}(s') \nonumber \\
    \leq & \sum\nolimits_{i=1}^N \mathbb{E}_{P^\star_i, \policy^{(\round)}_i} \sum\nolimits_{h=1}^{\infty} \gamma^{h-1} \norm{P^{(\round)}_i(s_{h,i}, a_{h,i}, \cdot) - P^\star_i(s_{h,i}, a_{h,i}, \cdot)}_1 V_{\text{max}} \nonumber \\
    \leq & ~2 \sum\nolimits_{i=1}^N \mathbb{E}_{P^\star_i, \policy^{(\round)}} \sum\nolimits_{h=1}^{\infty} \gamma^{h-1} d^{(\round)}_i(s_{h,i},a_{h,i}) V_{\text{max}} \label{eqn:regret-bound-full-horizon}
\end{align}

Next, we split the term into regret within $H$ horizon and the regret outside of $H$ horizon.
By applying Theorem~\ref{thm:regret-bound-large-horizon} with the assumption (Assumption~\ref{assumption:irreducibility}) of the $H$-step ergodicity $\epsilon$ of MDP associated to arm $i$, we can bound the regret outside of $H$ horizon by the regret at $H$ time step:
\begin{align}
    & \mathbb{E}_{P^\star_i, \policy^{(\round)}} \sum\nolimits_{h=H+1}^{\infty} \gamma^{h-1} d^{(\round)}_i(s_{h,i},a_{h,i}) V_{\text{max}} \nonumber \\
    = & \boldsymbol \sum\nolimits_{h=H+1}^{\infty} \gamma^{h-1} \mathbb{E}_{s_{h,i}, a_{h,i} \sim P^\star_i, \policy^{(\round)}_i} \left[ d^{(\round)}_i(s_{h,i},a_{h,i}) V_{\text{max}} \right] \nonumber \\
    \leq & \boldsymbol \sum\nolimits_{h=H+1}^{\infty} \gamma^{h-1} \frac{1}{\epsilon} \mathbb{E}_{s_{H,i}, a_{H,i} \sim P^\star_i, \policy^{(\round)}_i} \left[ d^{(\round)}_i(s_{h,i},a_{h,i}) V_{\text{max}} \right] \nonumber \\
    =& \frac{\gamma^H}{\epsilon (1-\gamma)} \mathbb{E}_{s_{H,i}, a_{H,i} \sim P^\star_i, \policy^{(\round)}_i} \left[ d^{(\round)}_i(s_{h,i},a_{h,i}) V_{\text{max}} \right] \label{eqn:regret-bound-H-horizon}
\end{align}

Now, we can further bound the contribution of arm $i$ in Equation~\ref{eqn:regret-bound-full-horizon} by substituting the regret after $H$ steps by Equation~\ref{eqn:regret-bound-H-horizon} to get:
\begin{align}
    & \mathbb{E}_{P^\star_i, \policy^{(\round)}_i} \sum_{h=1}^{\infty} \gamma^{h-1}  d^{(\round)}_i(s_{h,i},a_{h,i}) V_{\text{max}} \nonumber \\
    \leq & ~\mathbb{E}_{P^\star_i, \policy^{(\round)}_i} \left ( \sum_{h=1}^{H} \gamma^{h-1}  d^{(\round)}_i(s_{h,i},a_{h,i}) + \frac{\gamma^H}{\delta (1-\gamma)}  d^{(\round)}_i(s_{H,i},a_{H,i}) \right) V_{\text{max}} \nonumber \\
    \leq & ~ \left( 1 + \frac{\gamma^H}{\epsilon (1-\gamma)} \right) \mathbb{E}_{P^\star_i, \policy^{(\round)}_i} \left( \sum_{h=1}^{H}  d^{(\round)}_i(s_{h,i},a_{h,i}) V_{\text{max}} \right) \nonumber \\
    = & ~ \left(1 + \frac{\gamma^H}{\epsilon (1-\gamma)} \right) \sqrt{2 |S| \log(2 |A| N \round)} V_\text{max} \mathbb{E}_{P^\star_i, \policy^{(\round)}_i} \left(\sum_{h=1}^{H}  \frac{1}{\sqrt{ \max \{ 1, N_i^{(\round)}(s,a) \} }} \right) \nonumber \\
    \leq & ~ \left( 1 + \frac{\gamma^H}{\epsilon (1-\gamma)} \right) \sqrt{2 |S| \log(2 |A| N \Round)} V_\text{max} \mathbb{E}_{P^\star_i, \policy^{(\round)}_i} \left( \sum_{s \in \stateset, a \in \Action} \frac{v^{(\round)}_i(s,a)}{\sqrt{\max \{ 1, N_i^{(\round)}(s,a) \}}} \right)
\end{align}
where $v^{(\round)}_i(s,a)$ is a random variable denoting the number of visitations to the pair $(s,a)$ at arm $i$ that the policy $\policy^{(\round)}_i$ visits within $H$ steps under the transition probability $P^\star_i$.

Recall that $\sum\nolimits_{j=1}^{l-1} v^{(j)}_i(s,a) = N_i^{(\round)}(s,a)$. We also know that $0 \leq v^{(j)}_i(s,a) \leq H$. Applying Lemma~\ref{lemma:sum-of-sqrt}, we have:
\begin{align}
    \sum_{\round=1}^\Round \frac{v^{(\round)}_i(s,a)}{\sqrt{\max \{ 1, N_i^{(\round)}(s,a) \}}} \leq \left( \sqrt{H+1} + 1 \right) \sqrt{N_i^{(\round)}(s,a)}
\end{align}

Taking summation over all the $(s,a)$ pairs and applying Jensen inequality give us:
\begin{align}
    & \left( \sqrt{H+1} + 1 \right) \sum\nolimits_{s \in \stateset, a \in \Action} \sqrt{N_i^{(\round)}(s,a)} \nonumber \\
    \leq & \left( \sqrt{H+1} + 1 \right) |S| |A| \sqrt{\frac{\sum\nolimits_{s \in \stateset, a \in \Action} N^{(\round)}_i(s,a)}{|S| |A|}} \nonumber \\
    = & \left( \sqrt{H+1} + 1 \right) \sqrt{|S| |A| \Round \Horizon}
\end{align}
where $\sum\nolimits_{s \in \stateset, a \in \Action} N_i^{(\round)}(s,a) = \Round \Horizon$ is the total number of state-action pairs visited in $\Round$ rounds.

Lastly, using the trivial upper bound $V_\text{max} \leq \frac{R_{\text{max}}}{1-\gamma}$, we can take summation over the regret from all $\Round$ rounds. This give us:
\begin{align}
    & ~\sum\nolimits_{\round=1}^\Round \text{Reg}^{(\round)} \mathds{1}_{\truep \in \boldsymbol B^{(\round)}} \\
    \leq & ~\sum\nolimits_{i=1}^N 2 \left( 1 + \frac{\gamma^\Horizon}{\epsilon (1 - \gamma)} \right) \sqrt{2 |S| \log(2 |A| N \Round)} V_\text{max} \left( \sqrt{\Horizon+1} + 1 \right) \sqrt{|S| |A| \Round \Horizon} \nonumber \\
    \leq & ~O\left(\frac{1}{\epsilon} |S| |A|^{\frac{1}{2}} N H \sqrt{\Round \log \Round} \right)
\end{align}

\paragraph{Combining everything together}
In the first part, we show that $\sum\nolimits_{\round=1}^\Round \text{Reg}^{(\round)} \mathds{1}_{\truep \not\in \boldsymbol B^{(\round)}}$ is upper bounded by $O(\sqrt{\Round})$ for all $\Round \in \N$ with probability $1 - O(\delta)$.
In the second part, we show that $\sum\nolimits_{\round=1}^\Round \text{Reg}^{(\round)} \mathds{1}_{\truep \in \boldsymbol B^{(\round)}} = O(|S| |A|^{\frac{1}{2}} N \sqrt{\Round \log \Round})$.
Therefore, we can conclude that the total regret $\text{Reg}(\Round)$ is upper bounded by $O(|S| |A|^{\frac{1}{2}} N \sqrt{\Round \log \Round})$ for all $\Round \in \N$ with probability $1 - O(\delta)$.
\end{proof}

\subsection{Supplementary Lemma and Theorem}\label{sec:supplementary-lemma}
\begin{assumption}[Ergodic Markov chain]\label{assumption:irreducibility}
We denote $u_h^{P^\star_i, \policy_i}$ to be the state distribution of Markov chain induced by the MDP with transition probability $P^\star_i$ and policy $\policy_i$ after $h$ time steps. We assume $u_h^{P^\star_i, \policy_i}(s) > \epsilon > 0$ for all entry $s \in \stateset$, all arm $i \in [N]$, $h \geq H$, and all policy $\policy_i$. In other words, the state distribution after $H$ steps is universally lower-bounded by $\epsilon > 0$, which we say that the MDP is $H$-step $\epsilon$-ergodic.
\end{assumption}
Assumption~\ref{assumption:irreducibility} can be achieved when both the MDP is ergodic and the horizon~$H$ is large enough.

\begin{theorem}[Regret outside of $H$ steps]\label{thm:regret-bound-large-horizon}
When the Markov chain induced by transition $P^\star_i$ and policy $\policy$ is $H$-step $\epsilon$ ergodic, we have:
\begin{align}
    \mathbb{E}_{s_{h,i}, a_{h,i} \sim P^\star_i, \policy} f(s_{h,i},a_{h,i}) \leq \frac{1}{\epsilon} \mathbb{E}_{s_{H,i}, a_{H,i} \sim P^\star_i, \policy} f(s_{H,i},a_{H,i})
\end{align}
for all non-negative function $f$ and $h \geq H$.
\end{theorem}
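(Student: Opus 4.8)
The plan is to reduce both expectations to sums against the marginal state distributions $u_h^{P^\star_i,\policy}$ and $u_H^{P^\star_i,\policy}$, and then exploit the crude observation that any probability is at most $1$ while, by $\epsilon$-ergodicity, every coordinate of $u_H^{P^\star_i,\policy}$ is at least $\epsilon$.

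First I would fold the action randomness into the state. Define $g(s) \coloneqq \mathbb{E}_{a \sim \policy(\cdot \mid s)}[f(s,a)] \ge 0$, which is nonnegative because $f \ge 0$ (for a deterministic policy this is just $g(s) = f(s,\policy(s))$). Then for any timestep $k$,
\[
  \mathbb{E}_{s_{k,i}, a_{k,i} \sim P^\star_i, \policy} f(s_{k,i}, a_{k,i}) = \sum_{s \in \stateset} u_k^{P^\star_i,\policy}(s)\, g(s) .
\]
Next I would establish the pointwise domination $u_h^{P^\star_i,\policy}(s) \le \tfrac{1}{\epsilon}\, u_H^{P^\star_i,\policy}(s)$ for every $s \in \stateset$: the left side is at most $1$ since it is a probability, and Assumption~\ref{assumption:irreducibility} applied at timestep $H$ gives $u_H^{P^\star_i,\policy}(s) > \epsilon$, i.e.\ $\tfrac{1}{\epsilon}\, u_H^{P^\star_i,\policy}(s) > 1 \ge u_h^{P^\star_i,\policy}(s)$.

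Finally, since $g(s) \ge 0$, multiplying this pointwise inequality by $g(s)$ and summing over $s \in \stateset$ yields
\[
  \sum_{s \in \stateset} u_h^{P^\star_i,\policy}(s)\, g(s) \;\le\; \frac{1}{\epsilon} \sum_{s \in \stateset} u_H^{P^\star_i,\policy}(s)\, g(s) ,
\]
which, rewriting both sides via the identity above, is exactly the claim. There is no real obstacle here; the only points worth stating carefully are that nonnegativity of $f$ (hence of $g$) is what lets us pass the pointwise bound through the sum, and that we invoke the ergodicity assumption only at time $H$, where the uniform lower bound $u_H^{P^\star_i,\policy} > \epsilon$ combines with the trivial bound ``probability $\le 1$'' to produce the constant $1/\epsilon$.
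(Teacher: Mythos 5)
Your proposal is correct and follows essentially the same argument as the paper: both write the expectation as a sum over states weighted by the marginal distribution, then chain the crude bounds $u_h(s) \leq 1$ and $1 \leq u_H(s)/\epsilon$ (from the $\epsilon$-ergodicity assumption at time $H$), using nonnegativity of $f$ to pass the pointwise inequality through the sum. Folding the action randomness into $g(s)$ is only a cosmetic difference from the paper's explicit sum over state--action pairs.
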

\begin{proof}
\begin{align}
    \mathbb{E}_{s_{h,i}, a_{h,i} \sim P^\star_i, \policy} f(s_{h,i},a_{h,i}) = & \sum\nolimits_{s \sim S, a \sim A} \Pr(\policy_i(s) = a) u_h(s)  f(s, a) \nonumber \\
    \leq & \sum\nolimits_{s \sim S, a \sim A} \Pr(\policy_i(s) = a) f(s, a) \nonumber \\
    \leq & \sum\nolimits_{s \sim S, a \sim A} \Pr(\policy_i(s) = a) \frac{u_H(s)}{\epsilon}  f(s, a) \nonumber \\
    = & \frac{1}{\epsilon} \sum\nolimits_{s \sim S, a \sim A} \Pr(\policy_i(s) = a) u_H(s)  f(s, a) \nonumber \\
    = & \frac{1}{\epsilon} E_{s_{H,i}, a_{H,i} \sim P^\star_i, \policy} f(s_{H,i},a_{H,i}) 
\end{align}
\end{proof}

\begin{lemma}\label{lemma:sum-of-sqrt}
For any sequence of numbers $z_1, \cdots, z_T$ with $0 \leq z_j \leq H$ and $Z_\round = \max \{1, \sum\nolimits_{j=1}^\round z_j \}$, we have:
\begin{align}\label{eqn:sum-of-sqrt}
    \sum\nolimits_{\round=1}^\Round \frac{z_\round}{\sqrt{Z_{\round-1}}} \leq \left( \sqrt{H+1} + 1 \right) \sqrt{Z_\Round} \nonumber \\
\end{align}
\end{lemma}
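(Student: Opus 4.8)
The plan is to split the sum at the last index where the truncation $\max\{1,\cdot\}$ is still active. Call index $\round$ \emph{active} if $\sum_{j=1}^{\round-1}z_j\ge 1$; since the $z_j$ are nonnegative, the non-active indices form a prefix $\{1,\dots,k\}$, where $k$ is the largest index with $\sum_{j=1}^{k-1}z_j<1$ (such $k$ exists because $\round=1$ is always non-active). For an active $\round$ we have $Z_{\round-1}=\sum_{j<\round}z_j$ and $Z_\round=Z_{\round-1}+z_\round$, and the key step I would carry out is the per-term inequality
\[
  \frac{z_\round}{\sqrt{Z_{\round-1}}}\;\le\;\bigl(\sqrt{H+1}+1\bigr)\bigl(\sqrt{Z_\round}-\sqrt{Z_{\round-1}}\bigr).
\]
Writing $\sqrt{Z_\round}-\sqrt{Z_{\round-1}}=z_\round/\bigl(\sqrt{Z_\round}+\sqrt{Z_{\round-1}}\bigr)$, this reduces (the case $z_\round=0$ being trivial) to $\sqrt{Z_\round}+\sqrt{Z_{\round-1}}\le(\sqrt{H+1}+1)\sqrt{Z_{\round-1}}$, i.e.\ to $Z_\round\le(H+1)Z_{\round-1}$, which holds because $Z_\round=Z_{\round-1}+z_\round\le Z_{\round-1}+H\le(1+H)Z_{\round-1}$, using $Z_{\round-1}\ge 1$ and $z_\round\le H$. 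Summing over $\round=k+1,\dots,\Round$ telescopes the right-hand side to $(\sqrt{H+1}+1)\bigl(\sqrt{Z_\Round}-\sqrt{Z_k}\bigr)$.

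Next I would bound the non-active prefix, where $Z_{\round-1}=1$ for every $\round\le k$, so that
\[
  \sum_{\round\le k}\frac{z_\round}{\sqrt{Z_{\round-1}}}=\sum_{j\le k}z_j=\sum_{j\le k-1}z_j+z_k<1+H .
\]
Since $Z_k\ge\sum_{j\le k}z_j$ and also $Z_k<1+H\le(\sqrt{H+1}+1)^2$, we get $\sqrt{Z_k}\le\sqrt{H+1}+1$, hence $\sum_{\round\le k}z_\round/\sqrt{Z_{\round-1}}\le Z_k=\sqrt{Z_k}\cdot\sqrt{Z_k}\le(\sqrt{H+1}+1)\sqrt{Z_k}$. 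Adding this to the telescoped bound over the active indices, the two $\sqrt{Z_k}$ terms cancel and only $(\sqrt{H+1}+1)\sqrt{Z_\Round}$ survives, which is the claim; if $k=\Round$ there are no active indices and the prefix estimate already concludes the proof.

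I expect the main obstacle to be precisely this non-active prefix. There the naive per-term inequality fails, because $Z_\round$ can remain pinned at $1$ while $z_\round>0$, so there is no telescoping difference against which to charge the term. The fix — bounding the whole prefix in one shot by $Z_k$ and then using $Z_k<H+1$ to turn it into a multiple of $\sqrt{Z_k}$ that fuses with the telescoping tail — is what produces the sharp constant $\sqrt{H+1}+1$; the remaining work (the elementary per-term estimate and the telescoping sum) is routine.
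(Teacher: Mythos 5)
Your proof is correct, but it takes a genuinely different route from the paper's. The paper argues by induction on $T$: assuming the bound for $T-1$, it adds the last term $z_T/\sqrt{Z_{T-1}}$, squares, uses $z_T \le H$ and $Z_{T-1}\ge 1$ to absorb the cross term into $(\sqrt{H+1}+1)^2\,(Z_{T-1}+z_T)$, and then identifies $Z_{T-1}+z_T$ with $Z_T$. You instead split the indices at the point where the truncation $\max\{1,\cdot\}$ stops binding, prove the per-term estimate $z_t/\sqrt{Z_{t-1}} \le (\sqrt{H+1}+1)\bigl(\sqrt{Z_t}-\sqrt{Z_{t-1}}\bigr)$ from $Z_t \le (H+1)Z_{t-1}$, telescope over the active suffix, and charge the entire inactive prefix (whose contribution is at most $Z_k \le H+1$) against $(\sqrt{H+1}+1)\sqrt{Z_k}$ so that the two pieces fuse. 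Both arguments rest on the same two facts, $z_t\le H$ and $Z_{t-1}\ge 1$, and give the same constant, but yours is non-inductive and treats the truncated regime explicitly; this is a real merit, since in the paper's inductive step the final identification $Z_{T-1}+z_T = Z_T$ is only exact once $\sum_{j\le T-1} z_j \ge 1$ (otherwise one only has $Z_{T-1}+z_T\ge Z_T$, which points the wrong way), so your prefix analysis is precisely the careful handling of that corner. One cosmetic remark: when $H=0$ your strict inequality $Z_k<1+H$ degenerates to an equality, but the non-strict bound $Z_k\le H+1$ is all you actually use, so nothing breaks.
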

\begin{proof}
Proof by induction. Assume that Equation~\ref{eqn:sum-of-sqrt} holds for $T-1$. We have:
\begin{align}
    \sum\nolimits_{\round=1}^{\Round-1} \frac{z_\round}{\sqrt{Z_{\round-1}}} \leq \left( \sqrt{H+1} + 1 \right) \sqrt{Z_{\Round-1}} \nonumber \\
\end{align}

Adding an additional term $\frac{z_T}{\sqrt{Z_{T-1}}}$, we get:
\begin{align}
    \sum\nolimits_{\round=1}^{\Round-1} \frac{z_\round}{\sqrt{Z_{\round-1}}} + \frac{z_T}{\sqrt{Z_{T-1}}} \leq & \left( \sqrt{H+1} + 1 \right) \sqrt{Z_{\Round-1}} + \frac{z_T}{\sqrt{Z_{T-1}}} \nonumber \\
    = & \sqrt{ \left( \sqrt{H+1} + 1 \right)^2 Z_{T-1} + 2 \left( \sqrt{H+1} + 1 \right) z_T + \frac{z_T^2}{Z_{T-1}}} \nonumber \\
    \leq & \sqrt{ \left( \sqrt{H+1} + 1 \right)^2 Z_{T-1} + 2 \left( \sqrt{H+1} + 1 \right) z_T + H z_T } \nonumber \\
    \leq & \sqrt{\left( \sqrt{H+1} + 1 \right)^2 Z_{T-1} + \left(\sqrt{H+1} + 1 \right)^2 z_T} \nonumber \\
    \leq & \left( \sqrt{H+1} + 1 \right) \sqrt{Z_{T-1} + z_T} \nonumber \\
    = & \left( \sqrt{H+1} + 1 \right) \sqrt{Z_{T}}
\end{align}
which implies the Equation~\ref{eqn:sum-of-sqrt} also holds for $T$.

The initial case with $T=1$ holds trivially. Therefore, by induction, we conclude the proof.
\end{proof}

\subsection{Regret Bound with Unknown Optimal Penalty}
\agnosticRegretBound*

\begin{proof}
The main challenge of an unknown penalty term $\lambda^\star$ is that the optimality of the chosen transition $\boldsymbol P^{(t)}$ and policy $\policy^{(t)}$ does not hold in Theorem~\ref{thm:regret-decomposition-all} due to the misalignment of the penalty $\lambda^{(t)}$ used in solving the optimization in Equation~\eqref{eqn:optimistic-whittle} and the penalty $\lambda^\star$ used in computing the regret.

The optimality of $\lambda^{(t)}$ (minimizing $U^{\boldsymbol P, \lambda}_{\policy}$) and the optimality of $\boldsymbol P^{(t)}, \pi^{(t)}$ (maximizing $U^{\boldsymbol P, \lambda}_{\policy}$) are given by:
\begin{align}
    \lambda^{(t)}, \boldsymbol P^{(t)}, \policy^{(t)} = \arg\min\limits_{\lambda}\max\limits_{\boldsymbol P, \policy} U^{\boldsymbol P, \lambda}_{\policy} \nonumber
\end{align}
which give us, respectively:
\begin{align}
    U^{\boldsymbol P^{(t)}, \lambda^{(t)}}_{\policy^{(t)}} \leq U^{\boldsymbol P^{(t)}, \lambda^\star}_{\policy^{(t)}}, \quad U^{\truep, \lambda^{(t)}}_{\optpolicy} \leq U^{\boldsymbol P^{(t)}, \lambda^{(t)}}_{\policy^{(t)}} \label{eqn:cute-ineq1} 
\end{align}

Similarly, the optimality of $\lambda^\star$ can be written as:
\begin{align}
    \lambda^\star= \arg\min\limits_{\lambda} U^{\truep, \lambda}_{\optpolicy} \nonumber
\end{align}
which gives us 
\begin{align}
    U^{\truep, \lambda^\star}_{\optpolicy} \leq U^{\truep, \lambda^{(t)}}_{\optpolicy} \label{eqn:cute-ineq2}
\end{align}

Combining Inequality~\ref{eqn:cute-ineq1} and Inequality~\ref{eqn:cute-ineq2}, we can bound:
\begin{align}
    U^{\truep, \lambda^\star}_{\optpolicy} \leq U^{\truep, \lambda^{(t)}}_{\optpolicy} \leq U^{\boldsymbol P^{(t)}, \lambda^{(t)}}_{\policy^{(t)}} \leq U^{\boldsymbol P^{(t)}, \lambda^\star}_{\policy^{(t)}} \nonumber
\end{align}
This implies that:
\begin{align}
     \text{Reg}_{\lambda^\star}^{(t)} = U^{\truep, \lambda^\star}_{\optpolicy} - U^{\truep, \lambda^\star}_{\policy^{(t)}} \leq U^{\boldsymbol P^{(t)}, \lambda^\star}_{\policy^{(t)}} - U^{\truep, \lambda^\star}_{\policy^{(t)}}
\end{align}
which is exactly the same result as shown in Equation~\ref{eqn:regret-decomposition-all}.
The rest of the proof follows the same argument of Theorem~\ref{thm:regret-decomposition} and Theorem~\ref{thm:regret-bound}, which concludes the proof.
\end{proof}

\subsection{Choice of Horizon and Ergodicity Constant $\epsilon$}
For a given Markov chain, we need $H$ to be sufficiently large to ensure the probability of visiting any state after $H$ steps is at least a positive constant $\epsilon > 0$. The choice of $H$ depends on the MDP; we elaborate below how to select $H$ and $\epsilon$.

We follow a similar analysis of Markov chain convergence from Chapter 10 in \cite{spielman2007spectral} by defining:

$$\omega_2 = \max_{\pi \in \Pi} \sigma_2(P_\pi)$$

where $\sigma_2(P)$ is the magnitude of the second largest eigenvalue of the random walk matrix $P_\pi$ induced by the policy $\pi$. In practice, $\omega_2$ can be upper bounded by 1 if the MDP satisfies some properties, e.g., laziness of the Markov chain induced from the MDP (Chapter 10.2 in \cite{spielman2007spectral}).

Let $v$ be the corresponding stationary distribution of the random walk matrix $P_\pi$ with the policy $\pi$ that maximizes the second largest eigenvalue. We know that $v$ is strictly positive by ergodicity. When $\sigma_2 < 1$, we can write $r = \min_i v_i > 0$ and choose $\epsilon = \frac{1}{2} r > 0$.

Let $w$ be an arbitrary initial distribution. By applying Theorem 10.4.1 from~\cite{spielman2007spectral} (the directed graph version), for every $t > H = \log_{\omega_2} (\frac{1}{2} r^{3/2}) = \log_{\omega_2} (\sqrt{2} \epsilon^{3/2})$, we have:

$$ | v - P_\pi^t w |_1 \leq \sqrt{\frac{1}{\min_i v_i}} \omega_2^t \leq \frac{r}{2}$$
which implies that the minimum value of $P_\pi^t w$ and the minimum value of $v$, i.e., $r$, differ by at most $\frac{r}{2}$. This implies that the minimum value of $P_\pi^t w$ is at least $\frac{r}{2} = \epsilon$ for any initial distribution $w$. This choice of $\epsilon$ and $H$ satisfies our requirement mentioned in Appendix~\ref{sec:supplementary-lemma}.

\section{Experiment Details}
\label{sec:experiment-addl-details}

\subsection{Whittle Index Implementation Speedups}
\label{sec:implementation-details}

We introduce a number of implementation-level improvements to speed up the computation of Whittle indices. To our knowledge these approaches are novel for Whittle index computation.

\paragraph{Early termination} The key insight is that the Whittle index threshold policy will pull the arms with the $K$ largest Whittle indices. As we compute Whittle indices for each of the $N$ arms, after we have computed the first $K$ Whittle indices, any future arm selected would have to have Whittle index at least as high as the $K$-th largest seen so far in order to be pulled. Let us notate the $K$-th largest value seen so far as $\texttt{top-k}$.

Whittle indices are computed using a binary search procedure \citep{qian2016restless}, which at each iteration tracks the upper bound $\overline{\lambda}$ and lower bound $\underline{\lambda}$ of the index. Once the upper bound falls below that of the minimum value of the $K$ largest indices so far $\overline{\lambda} < \texttt{top-k}$, then we can terminate the binary search procedure as we are guaranteed that we would not act on that arm anyways. 
We implement the tracking of the $K$ largest indices so far with a priority queue.

Similarly, we implement early termination to solve the bilinear programs $\eqref{eqn:optimistic-whittle}$ and $\eqref{eqn:optimistic-whittle-heuristic}$ as callbacks in the Gurobi solver, in which we check the value of the current objective bound. 

\paragraph{Memoization} We memoize every Whittle index result computed throughout execution to track the index resulting from each pair of probabilities $P_i$ and current state $s_i$ as we perform calculations for each arm~$i$. We implement this memoizer as a dictionary where the key is a tuple $(P_i, s_i)$ with $P_i$ recorded to four decimal places. 

To implement the bilinear programs $\eqref{eqn:optimistic-whittle}$ and $\eqref{eqn:optimistic-whittle-heuristic}$, we similarly memoize using the lower confidence bound (LCB) and upper confidence bound (UCB) that comprise the space $\boldsymbol B_i^{(\round)}$. 




\subsection{Synthetic Data}

The synthetic datasets are created by generating transition probabilities $P^i_{s, a, s'}$ sampled uniformly at random from the interval $[0, 1]$ for each arm~$i$, starting state~$s$, action~$a$, and next state~$s'$. Specifically we select transition probabilities for the probability of transitioning to a good state $P^i_{s, a, s'=1}$, then set $P^i_{s, a, s'=0} = 1 - P^i_{s, a, s'=1}$.

To ensure the validity constraints that acting is always helpful and starting in the good state is always helpful, we apply the following: for all arms~$i \in [N]$: 
\begin{itemize}
    \item \emph{Acting is always helpful:} If this requirement is violated with $P^i_{s, a=1, 1} < P^i_{s, a=0, 1}$, then $P^i_{s, a=0, 1} = P^i_{s, a=1, 1} \times \eta$ where $\eta$ is uniform noise sampled between $[0, 1]$.
    \item \emph{Starting from good state is always helpful:} If this requirement is violated with $P^i_{s=1, a, 1} < P^i_{s=0, a, 1}$, then $P^i_{s=0, a, 1} = P^i_{s=1, a, 1} \times \eta$ where $\eta$ is uniform noise sampled between $[0, 1]$.
\end{itemize}

The \emph{thin margin} dataset is created by mirroring the procedure described above but then constraining the probability of transitioning to a good state $P^i_{s, a, s'=1}$ to the interval $[0.2, 0.4]$. Thus the probabilities of transitioning to the bad state $P^i_{s, a, s'=0}$ are all between $[0.6, 0.8]$.

\subsection{Acting in Low-Budget Settings}

The potential impact of effectively allocating one resource is greater in low-budget settings. As one example, the ARMMAN setting from our experiments helps distribute a small number of healthcare workers across a group of pregnant women for preventative health care. We study real data from ARMMAN to show that the performance gap between approaches is wider in low-budget settings.

Using one actual instance from ARMMAN, we consider distributing healthcare workers across  mothers (arms). Using the true transition probabilities, we calculate the (sorted) Whittle indices of an optimal policy as: 0.42, 0.39, 0.28, 0.23, 0.19, 0.11, 0.07, 0.

In the table below, we first show the expected reward of the optimal action and a random action (baseline) as we increase budget  in the ARMMAN problem. We then calculate the difference in reward between the optimal action and random action for each budget level, normalized per worker. It is clear that the potential impact over the baseline of effectively allocating one worker is greater in low budget settings.

\begin{table}[!ht]
\centering
\begin{tabular}{lrrr}
\toprule
& \multicolumn{2}{c}{Reward} & Reward gap per worker \\
\cmidrule{2-3}
$K$ & Optimal & Random & $(\text{Opt} - \text{Random}) / K$ \\
\midrule
1 & 0.42 & 0.211 & 0.209 \\
2 & 0.81 & 0.423 & 0.194 \\
3 & 1.09 & 0.634 & 0.152 \\
4 & 1.32 & 0.845 & 0.119 \\
5 & 1.51 & 1.056 & 0.091 \\
6 & 1.62 & 1.268 & 0.059 \\
7 & 1.69 & 1.479 & 0.030 \\
8 & 1.69 & 1.690 & 0.000 \\
\bottomrule
\end{tabular}
\caption{Reward contribution from each worker}
\end{table}

\subsection{Computation Infrastructure}
All results are averaged over 30 random seeds. Experiments were executed on a cluster running CentOS with Intel(R) Xeon(R) CPU E5-2683 v4 @ 2.1 GHz with 8GB of RAM using Python 3.9.12. The bilinear program solved using Gurobi optimizer 9.5.1. 


\end{document}